\newtheorem{proposition}{Proposition}
\title{Learning Generalized Gumbel-max Causal Mechanisms}
\author{%
  Guy Lorberbom\thanks{Equal contribution} \\
  Technion \\
  Haifa, Israel\\
  \texttt{guy\_lorber@campus.technion.ac.il} \\
  \And
  Daniel D.~Johnson$^*$ \\
  Google Research \\
  Toronto, ON, Canada\\
  \texttt{ddjohnson@google.com} \\
  \AND
  Chris J.~Maddison \\
  University of Toronto\\ \& Vector Institute \\
  Toronto, ON, Canada\\
  \texttt{cmaddis@cs.toronto.edu} \\
  \And
  Daniel Tarlow \\
  Google Research \\
  Montreal, QC, Canada\\
  \texttt{dtarlow@google.com} \\
  \And
  Tamir Hazan \\
  Technion \\
  Haifa, Israel\\
  \texttt{tamir.hazan@technion.ac.il} \\
}
\begin{document}

\maketitle

\begin{abstract}
  To perform counterfactual reasoning in Structural Causal Models (SCMs), one needs to know the causal mechanisms, which provide factorizations of conditional distributions into noise sources and deterministic functions mapping realizations of noise to samples.
  Unfortunately, the causal mechanism is not uniquely identified by data that can be gathered by observing and interacting with the world, so there remains the question of how to choose causal mechanisms.
  In recent work, Oberst \& Sontag (2019) propose Gumbel-max SCMs, which use Gumbel-max reparameterizations as the causal mechanism due to an intuitively appealing \emph{counterfactual stability} property.
  In this work, we instead argue for choosing a causal mechanism that is best under a quantitative
  criteria such as minimizing variance when estimating counterfactual treatment effects.  
  We propose a parameterized family of causal mechanisms that generalize Gumbel-max.
  We show that they can be trained to minimize counterfactual effect variance and other losses on a distribution of queries of interest, yielding lower variance estimates of counterfactual treatment effect than fixed alternatives, also generalizing to queries not seen at training time.
\end{abstract}

\section{Introduction}

\cite{pearl2009causality} presents a ``ladder of causation'' that distinguishes three levels of causal concepts: associational (level 1), interventional (level 2), and counterfactual (level 3). As an illustrative example, suppose we wish to compare two treatments for a patient in a hospital. Level 1 corresponds to information learnable from passive observation, e.g. correlations between treatments given in the past and their outcomes. Level 2 coresponds to active intervention, e.g. choosing which treatment to give to a new patient, and measuring the distribution of outcomes it causes (called an \emph{interventional distribution}). Level 3 corresponds to reasoning about hypothetical interventions given that some other outcome actually occurred (called a \emph{counterfactual distribution}): given that the patient recovered after receiving a specific treatment, what would have happened if they received a different one? The three levels are distinct in the sense that it is generally not possible to uniquely determine higher level models from lower level information \citep{bareinboim2020pearl}. In particular, although we can determine level 2 information (such as the average effect of each treatment) by actively intervening in the world, we cannot determine level 3 information in this way (such as the effect two different treatments would have had for a single situation).

Nevertheless, we still desire to reason about counterfactuals. 
First, counterfactual reasoning is fundamental to human cognition, e.g., in assigning credit or blame, and as a mechanism for assessing potential alternative past behaviors in order to update policies governing future behavior.
Second, counterfactual reasoning is computationally useful, for example allowing us to shift measurements from an observed policy to an alternative policy in an off-policy manner \citep{buesing2018woulda,oberst19a}. 

Doing counterfactual reasoning thus requires us to make an assumption about the \emph{causal mechanism} of the world, which specifies how particular choices lead to particular outcomes while holding ``everything else'' fixed. Different assumptions, however, lead to different counterfactual distributions. One approach, as exemplified by \cite{oberst19a}, is axiomatic.
There, an intuitive requirement of counterfactuals is presented, and then a causal mechanism is chosen that provably satisfies the requirement.
The resulting proposal is \emph{Gumbel-max SCMs} which assume causal mechanisms are governed by the Gumbel-max trick.

In this work, we instead view the choice of causal mechanism as an optimization problem, and ask what causal mechanism (that is consistent with level 2 observations) yields the most desirable behavior under some statistical or computational criterion. For example, what mechanism leads to the lowest variance estimates of treatment effects in a counterfactual setting?
If we are ultimately interested in estimating a level 2 property, choosing a level 3 mechanism that minimizes the variance of our estimate can lead to algorithms that converge faster. Alternatively, we can view minimizing variance as a kind of stability assumption on the treatment effect: we are interested in the causal mechanism for which the treatment effect is as ``evenly divided'' as possible across realizations of the exogenous noise.
More generally, casting the problem in terms of optimization gives additional flexibility to choose a causal mechanism that
is \emph{specifically-tuned} to a distribution of observations and interventions of interest, 
and in terms of a loss function that measures the quality of a counterfactual sample. A key insight to lay the foundation for specifically-tuned causal mechanisms is to view the average treatment effects (or other measure of interest) from the perspective of a coupling between interventional and counterfactual distributions. 

We begin by drawing connections between causal mechanisms and couplings, and show that defining a level 3 structural causal model consistent with level 2 observations is equivalent to defining an \emph{implicit coupling} between interventional distributions.
Next, to motivate the need for specifically-tuned causal mechanisms, we prove limitations of
non-tuned mechanisms (including Gumbel-max) and the power of tuned mechanisms by drawing on connections
to literature on couplings, optimal transport, and common random numbers.
We then introduce a continuously parameterized family of causal mechanisms 
whose members are identical when used in a level 2 context but different
when used in a level 3 context. The families contain Gumbel-max, but a wide variety of other mechanisms can be learned by using
gradient-based optimization over the family of mechanisms.
 Empirically we show that the mechanisms can be learned using a variant of Gumbel-softmax
relaxation \citep{maddison2017concrete,jang2017gumbelsoftmax}, and that the resulting mechanisms improve over Gumbel-max and other fixed 
mechanisms.
 Further, we show that the learned mechanisms generalize, in the sense that we can learn 
a causal mechanism from a \emph{training set} of observed outcomes and counterfactual queries
and have it generalize to a \emph{test set} of observed outcomes and counterfactual
queries that were not seen at training time.

\section{Background} 

\paragraph{Structural Causal Models.}
Here we briefly summarize the framework of structural causal models (SCMs), which enable us to ask and answer counterfactual questions.
SCMs divide variables into \emph{exogenous} background (or \emph{noise}) variables
and \emph{endogenous} variables that are modeled explicitly.
Each endogenous variable $v_i$ has a set of parent endogenous variables $pa_i$, an associated exogenous variable $u_i$, and a function $f_i$. Intuitively, the function $f_i$ specifies how the value of $v_i$ depends on the other variables $pa_i$ of interest, and $u_i$ represents ``everything else'' that may influence the value of $v_i$.
Putting these together along with a prior distribution over $u_i$'s defines the \emph{causal mechanism} governing $v_i$, as $v_i = f_i(pa_i, u_i)$; we emphasize that this mapping is deterministic, as all of the randomness in $v_i$ is captured by either $pa_i$ or $u_i$.

Marginalizing over the exogenous variable $u_i$ yields the interventional distribution $p(v_i | pa_i)$, which specifies how $v_i$ is affected by modifying its parents. Counterfactual reasoning, on the other hand, amounts to holding $u_i$ fixed but considering multiple values for $pa_i$ and consequently for $v_i$. In other words, if $v^{(1)}_i = f_i(pa^{(1)}_i, u_i)$ and $v^{(2)}_i = f_i(pa^{(2)}_i, u_i)$, the counterfactual distribution is $p(v^{(2)}_i | pa^{(2)}_i, v^{(1)}_i, pa^{(1)}_i) = \int_{u_i} p(v^{(2)}_i | pa^{(2)}_i, u_i)p(u_i | v^{(1)}_i, pa^{(1)}_i)$.

\paragraph{Gumbel-max SCMs}
The Gumbel-max trick is a method for drawing a sample from a categorical distribution defined by logits $l \in \mathbb{R}^K$, e.g. $p(X=k) \propto \exp l_k$. It is based on the fact that, if $\gamma_k \sim \Gumbel(0)$ for $k \in \{1, \ldots, K\}$, then\footnote{Samples of Gumbel(0) random variables can be generated as $-\log(-\log(u))$, where $u \sim \Uniform([0, 1])$}
\[\P_\gamma \Big[x = \argmax_{k} \left[l_k + \gamma_k\right]\Big] = \frac{\exp(l_x)}{\sum_k \exp(l_k)}.\]

\cite{oberst19a} use this as the basis for their Gumbel-max SCM, which uses vectors of Gumbel random variables $\gamma \in \mathbb{R}^K$ as the exogenous noise, and specifies the causal mechanism for each variable $x$ with parents $pa$ as
\[
x = f(pa, \gamma) = \argmax_{k} \left[l_k + \gamma_k\right],
\]
where $l_k = \log p_k = \log p(X = k | pa)$ is the interventional distribution of $x$ given the choice of $pa$.

\citet{oberst19a} motivate their Gumbel-max SCMs by appealing to a property known as \emph{counterfactual stability}: if we observe $X^{(1)} = i$ under intervention $p^{(1)} = p$, then the only way we can observe $X^{(2)} = j \ne i$ under intervention $p^{(2)} = q$ is if the probability of $j$ relative to $i$ has increased, i.e. if $ \frac{q_j}{p_j} > \frac{q_i}{p_i}$ or equivalently $\frac{q_j}{q_i} > \frac{p_j}{p_i}$. This property generalizes the well-known monotonicity assumption for binary random variables \citep{pearl1999probabilities} to the larger class of unordered categorical random variables, and
corresponds to the intuitive idea that the outcome should only change in a counterfactual scenario if there is a reason for it to change (i.e., some other outcome's relative probability increased more than the observed outcome's).
\citet{oberst19a} show that Gumbel-max SCMs are counterfactually stable.

Gumbel-max SCMs have interesting properties due to the Gumbel-max trick introducing statistical independence between the max-value and the $\argmax$, cf. \cite{maddison2014astar}. \cite{oberst19a} exploit this to sample from counterfactual distributions by sampling the exogenous variables $\gamma$ from the posterior over Gumbel noise: conditioned on an observed outcome $x = \argmax_{k} \left[l_k + \gamma_k\right]$ we can sample $\gamma_x \sim \text{Gumbel}(-l_x)$ and then sample the other $\gamma_i$ from truncated Gumbel distributions.

We observe that sharing the exogenous noise $\gamma$ between two different logit vectors $l^{(1)}$ and $l^{(2)}$ yields a joint distribution over pairs of outcomes:
\[
\pi_{gm}(x,y) = \P_\gamma \Big[x = \argmax_{k} \left[l^{(1)}_k + \gamma_k\right] \mbox{ and } y = \argmax_{k} \left[l^{(2)}_k + \gamma_k\right] \Big].
\]
We call this a \emph{Gumbel-max coupling}.

\paragraph{Couplings.}
A coupling between categorical distributions $p(x)$ and $q(y)$ is a joint distribution $\pi(x, y)$ such that
\begin{align}
\sum_x \pi(x, y) = q(y) \hbox{ and }
\sum_{y} \pi(x, y) = p(x).
\end{align}
The set of all couplings between $p$ and $q$ is written $C(p, q)$. A core problem of coupling theory is find a coupling $\pi \in C(p,q)$ in order to estimate $\E_{x \sim p} [h_1(x)] - \E_{y \sim q} [h_2(y)] = \E_{x,y \sim \pi} [h_1(x) - h_2(y)]$ for some real-valued cost functions $h_1, h_2$, with minimal variance $\mbox{Var}_{x,y \sim \pi} [h_1(x) - h_2(y)]$. Interestingly, whenever $h_1(x), h_2(y)$ are monotone, such a coupling can be attained by $ (F^{-1}_X(U), F^{-1}_Y(U)))$, when $F_X(t), F_Y(t)$ are the cumulative distribution functions (CDFs) of $X,Y$
and $U$ is a uniform random variable over the unit interval. When $h_1,h_2$ are not monotone, there are clearly cases where CDF inversion produces suboptimal couplings.
Couplings are also used to define the Wasserstein distance $W_1(p,q;d)$ between two distributions $p$ and $q$ (with respect to a metric $d$ between samples):
\begin{align}
W_1(p, q; d) & = \inf_{\pi \in C(p, q)} \mathbb{E}_{x, y \sim \pi} d(x, y), \label{eq:wasserstein}
\end{align}
When $d(x, y) = 1_{x \not = y}$, then a coupling that attains this infimum is known as a \emph{maximal coupling}; such a coupling maximizes the probability that $X$ and $Y$ are the same.

\paragraph{Causal mechanisms as implicit couplings.}
Any causal mechanism for a variable $v_i$ defines a coupling between outcomes under two counterfactual interventions. In other words, for any two interventions $pa^{(1)}_i$ and $pa^{(2)}_i$ on the parent nodes $pa_i$, sharing the exogenous noise $u_i$ yields a coupling $\pi_{SCM}$ between the interventional distributions $p(v_i^{(1)} | do(pa_i^{(1)}))$ and $p(v_i^{(2)} | do(pa_i^{(2)}))$:
\[
\pi_{SCM}\left(v_i^{(1)},v_i^{(2)}\right) = \P_{u_i} \left[v_i^{(1)} = f_i(pa^{(1)}_i, u_i) \mbox{ and } v_i^{(2)} = f_i(pa^{(2)}_i, u_i)\right].
\]

We call this an \emph{implicit coupling} because $f_i(\cdot, u)$ is not directly defined with respect to a particular pair of marginal distributions $p,q$, but instead arises from running the same causal mechanism forward with shared noise but different inputs, representing either $p$ or $q$.

This connection between SCMs and couplings enables us to translate ideas between the two domains. For instance, suppose we are interested in estimating $\E_{x \sim p} [h_1(x)] - \E_{y \sim q} [h_2(y)]$ between observed outcomes from $p \propto \exp l^{(1)}$ and counterfactual outcomes from $q \propto \exp l^{(2)}$. We might do so using counterfactual reasoning in the Gumbel-max SCM:
\begin{equation}
\E_{x,y \sim \pi} [h_1(x) - h_2(y)] = \sum_{y} q(y) \E_{\gamma \sim (G_2 | y)} \Big[h_1(\argmax_{k} [l^{(1)}_k - l^{(2)}_k + \gamma_k]) - h_2(\argmax_{k} \left[\gamma_k\right])\Big]. \nonumber
\end{equation}
Here $G_2 | y$ is the Gumbel distribution with location $\log l^{(2)}_k$ conditioned on the event that $y$ is the maximal argument; the proof of this equality appears in Appendix \ref{app:gumbel-max-ate}. 
However, if we are interested in minimizing the variance of a Monte Carlo estimate of the expectation, this may not be optimal.

\section{Problem Statement: Building SCMs by Learning Implicit Couplings} 
\label{sec:problem}

In this section, we formalize the task of selecting causal mechanisms according to some quantitative metric.
Recall our initial example of comparing two treatment policies for patients in a hospital. For simplicity, we consider a single action $a$ taken by a hypothetical treatment policy, which leads to a distribution over outcomes $v$. (More generally, we can let $a$ be the set of parents for any variable $v$ in a SCM.) As in \citet{oberst19a}, we assume we have access to all of the interventional distributions of interest without any latent confounders.

Our goal is to define a parameterized family of causal mechanisms consistent with the interventional distributions for all possible actions $a$. We assume that the set $V$ of possible outcomes is finite with $|V| = K$, but do not restrict the space of actions $a$; instead, we require that our causal mechanism can produce samples from any interventional distribution $p(v | do(a))$, expressed as a vector $l^{(a)} \in \R^K$ for which $p(v = k | do(a)) \propto \exp l^{(a)}_k$.

Specifically, let $\bu \in \mathbb{R}^D$ be a sample from some noise distribution
(e.g., from a Gumbel(0) distribution per dimension)
and let $l \in \mathbb{R}^K$ be a vector of (conditional) logits defining a distribution over $K$ categorical
outcomes.
We wish to learn a function $f_\theta : \mathbb{R}^D \times \mathbb{R}^K \rightarrow \{1, \ldots, K\}$
that maps noise and logits to a sampled outcome.
We require that the process produces samples from the distribution
$p(k) \propto \exp l_k$ when integrating over $\bu$ (i.e., we want a reparameterization trick),
and also that we can counterfactually sample the exogenous noise $\bu$  conditioned on an observation $x^{(obs)}$ (e.g. $\bu \sim p(\bu | l, f_\theta(\bu, l) = x^{(obs)})$). We obtain an implicit coupling by running $f_\theta$ with the same noise and two different logit vectors $l^{(1)}$ and $l^{(2)}$. We can think of $l^{(1)}$ as the logits governing an observed outcome and $l^{(2)}$ as their modification under an intervention.

Each setting of the parameters $\theta$ produces a different SCM. We propose to learn $\theta$ in such a way as to approximately minimize an objective of interest. We provide two degrees of freedom for defining this objective.
First, we must choose a loss function $g_{l^{(1)}, l^{(2)}}: \{1, \ldots, K\} \times \{1, \ldots, K\} \rightarrow \mathbb{R}$ that assigns a real-valued loss to a joint outcome $(f_\theta(\bu, l^{(1)}), f_\theta(\bu, l^{(2)}))$, perhaps modulated by $l^{(1)}$ and $l^{(2)}$.
The loss function is used to specify how desirable a pair of observed and counterfactual outcomes are (e.g., if we are trying to minimize variance, the squared difference $(h(v^{(1)}) - h(v^{(2)})^2$ of scores for each outcome).
Second, we must choose a distribution $\mathcal{D}$ over pairs of logits $(l^{(1)}, l^{(2)})$.
This determines the distribution of observed outcomes and counterfactual queries of interest.

Given these choices, our main objective is as follows:
\begin{align}
    \theta^* = \argmin_{\theta} \;\; &
    \mathbb{E}_{(l^1, l^2) \sim \mathcal{D}} \mathbb{E}_{\bu} \left[ g_{l^{(1)}, l^{(2)}}(f_\theta(\bu, l^{(1)}), f_\theta(\bu, l^{(2)})) \right] \label{eq:problem_objective} \\
    \hbox{subject to}\;\; & \P_{\bu} [f_\theta(\bu, l) = k]  = \frac{\exp l_k}{\sum_{k'} \exp l_{k'}}
    \qquad \hbox{for all $l \in \mathbb{R}^K$.} \label{eq:problem_constraint}
\end{align}

\paragraph{Relationship to 1-Wasserstein Metric. }
We are free to set $g$ to be a distance metric $d$, in which case the implicit coupling between $f_\theta(\bu, l^{(1)})$ and $f_\theta(\bu, l^{(2)})$ bears similarity to the optimal 1-Wasserstein coupling for $d$.
However, a key difference is that $f_\theta$ can be used to generate samples from one side of the coupling (say $p$) without knowledge of what $q$ will be chosen. Thus, $f_\theta$ can be seen as  \emph{coupling $p$ to all $q$ simultaneously}, in the same way that observing a particular outcome simultaneously induces counterfactual distributions for all alternative interventions.
In contrast, the Wasserstein optimization requires knowledge of both $p$ and $q$ and then computes a coupling specific to that pair.
We discuss the effect of this restriction in the next section.

\paragraph{Interpretation in terms of causal inference. }
To construct a full level 3 SCM, $f_\theta$ must be combined with a set of known level 2 interventional distributions $p(v | do(a))$, similar to the Gumbel-max SCM in this regard \citep{oberst19a}. In particular $f_\theta$ and Gumbel-max assume there are no latent confounders, and that the set of outcomes is discrete. The objective $g$ and distribution $\mathcal{D}$ serve a similar role as counterfactual stability or monotonicity assumptions \citep{oberst19a, pearl1999probabilities}, in that they are a-priori choices that select the intended level 3 mechanism from the set of consistent mechanisms. 
The main difference is that these assumptions are made at a higher level of abstraction. 
Instead of specifying the mechanism, we specify a family of mechanisms along with a quantitative quality measure that can be optimized. 

\section{Properties of Implicit Couplings}

Our development so far raises questions about the relationship
of the proposed approach to the Gumbel-max couplings underlying Gumbel-max SCMs and about the relationship of our objective to Wasserstein metrics.
In this section we establish the relationships and differences.
The main results are that despite being counterfactually stable, Gumbel-max couplings are not actually maximal couplings.
We then go further and show that any \emph{implicit coupling} 
is limited in expressivity. This establishes 
the difference to Wasserstein metrics and optimal transport, which are framed in terms of minimizing over the larger space of all couplings.

\subsection{Non-maximality of Gumbel-max Couplings}\label{sec:non-maximality-of-gumbel-max}

We know that Gumbel-max couplings are counterfactually stable. We might therefore hope that they are also maximal couplings, i.e. that they assign as much probability as possible to the counterfactual and observed outcomes being the same. Unfortunately, it turns out this is not the case.

\begin{restatable}{proposition}{propone}
 \label{prop:1}
 The probability that $x = y = i$ in a Gumbel-max coupling is $\frac{1}{1 + \sum_{j \not = i} \max(\frac{p(j)}{p(i)}, \frac{q(j)}{q(i)})}$.
\end{restatable}

The full proof is in Appendix \ref{sec:p_x_eq_y_gumbel_max}.
The main idea is to express the event $x = y = i$ as a conjunction of inequalities defining the argmax, then simplifying using properties of Gumbel distributions.

\begin{restatable}{corr}{corrone}
Gumbel-max couplings are not maximal couplings. In particular, they are suboptimal under the $1_{x \not = y}$ metric iff there is an $i$ such that
\begin{align}
\label{eq:suboptimality_of_gumbel_max}
\textstyle\max\left(\sum_{j \not = i} \frac{p(j)}{p(i)}, \sum_{j \not = i} \frac{q(j)}{q(i)}\right) < 
\sum_{j \not = i} \max\left(\frac{p(j)}{p(i)}, \frac{q(j)}{q(i)}\right).
\end{align}
\end{restatable}

The proof appears in Appendix \ref{sec:suboptimality_of_gumbel_max}. It follows from Prop.~\ref{prop:1} and the fact that the probability of $x = y = i$ in a maximal coupling is $\min(p(i), q(i))$.
On the positive side, 
it is straightforward to show that Gumbel-max couplings are optimal when $p = q$ and when there are only two possible outcomes (in which case they also satisfy the monotonicity assumption of \citet{pearl1999probabilities}).
We also show that Gumbel-max couplings are within a constant-factor of optimal as maximal couplings.

\begin{restatable}{corr}{corrtwo}
If the Gumbel-max coupling assigns probability $\alpha$ to the event that $x=y$, then the probability that $x = y$ under the maximal coupling is at most $2 \alpha$.
\end{restatable}
The proof appears in Appendix \ref{sec:constant_factor_approx}. It comes from bounding the ratio of the LHS to the RHS in \eqref{eq:suboptimality_of_gumbel_max}.

\subsection{Impossibility of Implicit Maximal Couplings}\label{sec:impossible-implicit-maximal}
Since Gumbel-max does not always induce the maximal coupling, we might wonder if some other implicit coupling mechanism could. 
Here we show that it is impossible.
In particular, we show that no fixed implicit coupling is maximal for every pair of distributions over the set $\{1,2,3\}$. Thus, there will always be some pair of distributions for which an implicit coupling is non-maximal.
A proof of the Proposition appears in Appendix \ref{sec:impossibility_proofs}.

\begin{restatable}{proposition}{proptwo}
There is no algorithm $f_\theta$ that takes logits $l$ and a sample $\bu$ of noise, and deterministically transforms $\bu$ into a sample from $\exp l$, such that when given any two distributions $p$ and $q$ and using shared noise $u$, the joint distribution of samples is always a maximal coupling of $p$ and $q$.
\end{restatable}

\section{Methods for Learning Implicit Couplings}
Here we develop methods for learning implicit couplings, the problem defined in \secref{sec:problem}.
We use the term \emph{gadget} to refer to a learnable, continuously parameterized family of $f_\theta$.
We present two gadgets in this section.
Gadget 1 does not fully satisfy the requirements laid out in \secref{sec:problem}, but it is simpler and introduces some key ideas, so we present it as a warm-up.
Gadget 2 fully satisfies the requirements.

\subsection{Gadget 1}\label{sec:gadget-one}
The main idea in Gadget 1 is to learn a mapping $\pi_\theta : \mathbb{R}^K \rightarrow \mathbb{R}^{K \times K}$ from categorical distribution $p \in \mathbb{R}^K$ to an auxiliary joint distribution $\pi_\theta(x, z | p)$, represented as a matrix $\pi_\theta(\cdot, \cdot | p) \in \mathbb{R}^{K \times K}$.
The architecture of the mapping is constrained so that marginalizing out the auxiliary variable $z$  yields a distribution consistent with the given logits, i.e., $\sum_{z} \pi_\theta(x, z | p) = p(x)$.
We then generate $K^2$ independent $\gamma_{x,z} \sim \Gumbel(0)$ samples
and perform Gumbel-max on the auxiliary joint.
We only care about the sample of $x$,
so one way of doing this is to first maximize out the auxiliary dimension
to get $\gamma(p) = \max_z \{\gamma_{x,z} + \log \pi_\theta(x, z | p)\}$
and then return $\hat x = \argmax\{\gamma(p)\}$.
Here $\hat x$ is distributed according to $p$ because this is performing Gumbel-max on a joint distribution with correct marginals and then marginalizing out the auxiliary variable.

To create a coupling, we run this process separately for $p$ and $q$ but with shared realizations of the $K^2$ Gumbels. 
However, the place where Gadget 1 does not fully satisfy the requirements from \secref{sec:problem} is that we transpose the Gumbels for one of the samples.
That is, we sample a coupling as
\begin{align}
    [\gamma_1(p)]_x &= \max_z \{\gamma_{x,z} + \log \pi_\theta(x, z | p)\} & 
[\gamma_2(q)]_y &= \max_z \{(\gamma^T)_{y,z} + \log \pi_\theta(y, z | q)\} \\
\hat x &=\argmax\{\gamma_1(p) \} & 
\hat y &= \argmax\{\gamma_2(q) \}\label{eq:p_q_margs}.
\end{align}
Gadget 1 can still be used to create a coupling, but it is more analogous to antithetical sampling, where the noise source is used differently based on which sample is being drawn.
Note that like in antithetical sampling, both processes draw samples from the correct distribution, since transposing a matrix of independent Gumbels does not change the distribution.
We describe how to draw counterfactual samples from Gadget 1 in Appendix  \ref{app:gadget1-counterfactuals}.

We note that if $\theta$ is chosen such that $\pi_\theta(x=k, z=k | p) = p(x=k)$ and  $\pi_\theta(x, z | p)  = 0$ for $z \ne x$, the Gadget 1 SCM becomes identical to the Gumbel-max SCM. Thus, Gadget 1 is a generalization of the Gumbel-max causal mechanism. 

\subsection{Gadget 2}\label{sec:gadget-two}
Gadget 1 is not an implicit coupling as defined in \secref{sec:problem}, because it requires Gumbels to be transposed when sampling $p$ versus $q$.
In this section, we present a gadget  that is a proper implicit coupling.

Gadget 2 again invokes an auxiliary variable and parameterizes a learned joint distribution, but the auxiliary variable $z$ is no longer required to share the same sample space as $x$.
Further, instead of performing Gumbel-max on the learned joint directly, we start by drawing a single $z$ independently of $p$. 
The gadget is defined as follows:
\begin{align}
\gamma^{(z)}_i &\sim \Gumbel(0) \;\; \hbox {for $i = 1, \ldots, |Z|$} & 
\gamma^{(x)}_i &\sim \Gumbel(0) \;\; \hbox {for $i = 1, \ldots, |X|$}\\
\hat z & = \argmax_{z} (\log \pi_\theta(z) + \gamma^{(z)}) &
\hat x & = \argmax_{x} (\log \pi_{\theta}(x \mid \hat z, p) + \gamma^{(x)}).\label{eq:p_q_margs2}
\end{align}
To sample a coupling, we re-use all the $\gamma$'s and run the same process with $q$ instead of $p$. This means that we additionally get $\hat y = \argmax_{y} (\log \pi_{\theta}(y \mid \hat z, q)) + \gamma^{(x)})$.
Intuitively, we can think of $\hat z$ as a latent cluster identity
and each cluster being associated with a different learned mapping $\pi_\theta(x \mid \hat z, p)$. 
The learning can choose how to assign clusters so that a Gumbel-max coupling of $\pi_\theta(x \mid \hat z, p)$ and $\pi_\theta(y \mid \hat z, q)$ produces joint outcomes that are favorable under $g$.

\paragraph{Architecture for $\pi_\theta(x|z, p)$.}
Not all choices of $\pi_\theta(x|z, p)$ lead to correct samples.
For correctness, we need to enforce the analogous constraint as in Gadget 1, which is that when we integrate out the auxiliary variable, we get samples from the $p$ distribution provided as an input; i.e., $\sum_z \pi_\theta(z) \pi_\theta(x|z, p) = p(x)$ for all $p$. 
Here we describe how to build an architecture for $\pi_\theta(x|z, p)$ that is guaranteed to satisfy the constraint. 

First, we use a neural function approximator $h_\theta : \R^K \to \R_{+}^{Z \times K}$ that maps logits $l = \log p$ to a nonnegative matrix $A_0 = h_\theta(l)$ of probabilities for each pair $(z, x)$. Next, we iteratively normalize the columns to have marginals $p(x)$ and the rows of $A$ to have marginals $\pi_\theta(z)$ for $T$ steps (a modified version of the algorithm proposed by \citet{sinkhorn1967concerning}). The last iterate $A = A_{T}$ always satisfies $\sum_x A_{x,z} = \pi_\theta(z)$ but may only approximately satisfy the constraint that $\sum_z \pi_\theta(z) \pi_\theta(x|z, p) = p(x)$.
To deal with this, we treat $A$ as a proposal and apply a final accept-reject correction, falling back to an independent draw from $p$ if the $z$-dependent proposal is rejected. The marginals of this process give our expression for $\pi_\theta(x|z, p)$:

\begin{align}
c_x &= \frac{p(x)}{\sum_z A_{x, z}},
&
d_z &= \sum_x \frac{A_{x, z}}{\pi_\theta(z)}\,c_x,
&
\pi_\theta(x|z, p) &=\frac{c_x}{c_*}\cdot \frac{A_{x, z}}{\pi_\theta(z)} + \left(1 - \frac{d_z}{c_*}\right) p(x)
\end{align}
where $c_* = \max_x c_x$.
Encoding this expression in the architecture of $\pi_\theta(x|z, p)$ ensures that $\sum_z \pi_\theta(z) \pi_\theta(x|z, p) = p(x)$, and thus all choices of $\theta$ yield a valid reparameterization trick for all $p$.
See Appendix 
\ref{app:accept-reject-correction} 
for a proof.
While we could parameterize and learn $\pi_\theta(z)$, we have thus far fixed it to the uniform distribution.

We note that if we let $|Z| = 1$ (i.e. we assign all outcomes to one cluster), $\hat z$ becomes deterministic, and thus $\pi_\theta(x \mid \hat z, p) = \pi_\theta(x \mid p) = p(x)$. In this case, we recover a Gumbel-max coupling of $p(x)$ and $q(y)$, showing that Gadget 2 also generalizes the Gumbel-max SCM.

\paragraph{Sampling from counterfactual distributions.}
Given a particular outcome $x \sim p$, we can sample a counterfactual $y$ under some intervention $q$ by first computing the posterior $\pi_\theta(z | x, l^{(1)}) \propto \pi_\theta(z) \pi_\theta(x | z, l^{(1)})$ and sampling an auxiliary variable $z$ that is consistent with the observation. 
Given $z$, we obtain a Gumbel-max coupling between $\pi_\theta(x | z, p)$ and $\pi_\theta(y | z, q)$, so the top-down algorithm from \cite{oberst19a} can be used to sample Gumbels and a counterfactual outcome $y$.

\subsection{Learning Gadgets}

Recall from \secref{sec:problem} that our goal is to learn $\theta$ so that the gadgets above produce favorable implicit couplings when measured against dataset $\mathcal{D}$ and cost function $g$.
In both gadgets, the constraint in \eqref{eq:problem_constraint} is automatically satisfied by the architecture.
Thus, we need only concern ourselves with \eqref{eq:problem_objective}, which is a minimization problem over $\mathcal{L}$ with the following form:
\begin{align}
\mathcal{L}(\theta) &=
    \mathbb{E}_{(p, q) \sim \mathcal{D}} \mathbb{E}_{\gamma} \left[ g(f_\theta(\gamma, p), f_\theta(\gamma, q)) \right] 
\end{align}
We would like to use a reparameterization gradient where we sample $(p, q)$ and $\gamma$, and then differentiate the inner term with respect to $\theta$.
However, the loss is not a smooth function of $\theta$ given a realization of $\gamma$ due to the $\argmax$~operations in Eqs.~\ref{eq:p_q_margs}, \ref{eq:p_q_margs2}.
Thus, our learning strategy is to relax these $\argmax$~operations into
softmaxes, as in \cite{jang2017gumbelsoftmax,maddison2017concrete}.
This yields a smoothed $\tilde f_\theta \in \Delta^{K-1}$ and a smoothed softmax surrogate loss:
\begin{align}
\mathcal{\tilde L}(\theta) &=
    \mathbb{E}_{(p, q) \sim \mathcal{D}} \mathbb{E}_{\gamma} \left[ \sum_{x, y}  [\tilde f_\theta(\gamma, p)]_x \cdot [\tilde f_\theta(\gamma, q)]_y \cdot g(x, y) \right].
\end{align}
This is differentiable, and we can optimize it using gradient based methods and standard techniques (either explicitly summing over all $x,y$ or taking a sample-based REINFORCE gradient).

\section{Related Work}
The variational approach for coupling relates the maximal coupling to the total variation distance $\|p - q\|_{TV} \triangleq \max_{A \subset \{1,...,K\}} | p(A) - q(A) |$ since $\|p - q\|_{TV} = \inf_{\pi \in C(p,q)} \P_\pi[X \ne Y]$. The Wasserstein distance $W(p,q; d)$ in Eq.~\ref{eq:wasserstein}, is a generalization of the variational principle. 
The Wassestein distance can be extended to the optimal transport setting, when $d$ is any function, which has been used extensively in machine learning, see \citep{frogner2015learning, arjovsky2017wasserstein, alvarez2018structured, benamou2015iterative, blondel2018smooth, courty2016optimal, courty2017learning, cuturi2013sinkhorn, 
aude2016stochastic, kusner2015word, luise2018differential, peyre2019computational}. 

The maximal coupling of Bernoulli random variables enforces monotonicity. This monotonicity is attained by sampling $u \sim Uniform([0,1])$ uniformly and setting $X = 1[u \le p]$ and $Y = 1[u \le q]$ \citep{frechet1951tableaux}. More generally, Strassen's theorem asserts that any two random variables satisfy $F_X(t) \le F_Y(t)$ if and only if they are monotone, i.e., there is a coupling $\pi$ for which $\P_\pi[Y \le X] = 1$. The monotone coupling can be realized by using the same uniform random variable U, setting $X = F_X^{-1}(U)$ and $Y = F_Y^{-1}(U)$ \citep{lindvall1999strassen}. A monotone coupling $\pi \in C(p,q)$ of two marginal probabilities $p$ and $q$, maximizes the covariance of $(X,Y)$ and consequently minimize the variance of $X - Y$, since $Var_\pi[X - Y] = Var_p[X] + Var_q[\hat Y] - 2 Cov_\pi (\hat X, \hat Y)$. A coupling that maximize its correlation, or equivalently minimize the variance of its difference, is an optimal coupling. Minimizing the variance of $X-Y$ helps to stabilize their estimation in machine learning applications, e.g., \cite{grathwohl2017backpropagation}. 

\cite{Cambanis1976InequalitiesFE} give conditions when the coupling $(F_X^{-1}(U),F_Y^{-1}(U))$ is optimal. Let $d(x, y)$ be \emph{supermodular}, i.e., $d(x_1, y_1) + d(x_2, y_2) \ge d(x_1, y_2) + d(x_2, y_1)$ if $x_1 \le x_2$ and $y_1 \le y_2$. Then $\sup_{\pi \in C(p, q)} \mathbb{E}_{x, y \sim \pi} d(x, y) = 
\mathbb{E}_{U \sim \Uniform(0, 1)} d(F^{-1}_X(U), F^{-1}_Y(U))$. Specifically, when $d(x, y) = h_1(x) h_2(y)$ and $h_i(F^{-1}_X(u))$ are non-increasing or non-decreasing functions of $u$, the LHS is the coupling with maximum covariance and the RHS is the coupling achieved by common random numbers paired with a CDF inversion mechanism. See also \cite{glasserman1992some}.

Monotonicity assumptions have also been used in the causality literature to enable identification of a unique level 3 SCM from interventional data, for both binary \citep{pearl1999probabilities} and categorical \citep{lu2020sample} random variables. We note that the implicit coupling induced by these assumptions also corresponds to the inverse-CDF coupling $(F_X^{-1}(U),F_Y^{-1}(U))$.

\section{Experiments}
We evaluate our approach in two settings.%
\footnote{An implementation of our approach and instructions for reproducing our experiments is available at \url{https://github.com/google-research/google-research/tree/master/gumbel_max_causal_gadgets}.}
First, we build understanding by exploring performance on several datasets of fixed and random logits $\in R^K$. Next, we learn low-variance counterfactual treatment effects in the sepsis management MDP from \cite{oberst19a}.

\subsection{Optimizing for maximality}
Section \ref{sec:impossible-implicit-maximal} shows that no implicit coupling is maximal for every pair of $p$ and $q$.
Here we show that we can learn near-maximal couplings if we limit attention to narrower distribution $\mathcal{D}$ of interest.

We compare our proposed learning method against fixed Gumbel-max couplings and a maximal coupling.
We hold fixed a single pair of test distributions $p^{test}$ and $q^{test}$ and then vary distributions that are trained on.
Specifically, we train Gadget 2 on pairs $(p^{test} + \rho \cdot \eta_p, q^{test} + \rho \cdot \eta_q)$ where $\eta_p$ and $\eta_q$ are vectors of unit variance Gaussian noise, and $\rho$ is a noise scale.
When $\rho$ is small, it is possible to learn an implicit coupling that is specific to the region of distributions around $(p^{test}, q^{test})$ and we can achieve implicit couplings that are nearly maximal.
When $\rho$ is large, we are asking $f_\theta$ to couple together all pairs of distributions, and we expect to run into the impossibility results in \secref{sec:impossible-implicit-maximal}.

Results appear in \figref{fig:random-logits-reverse} (a), with additional visualizations in Appendix \ref{app:experiments-details}.
Indeed, when noise scales are small, our gadget learns a near-maximal coupling.
When the noise scale becomes large, the quality declines.
Interestingly, when the noise scale is orders of magnitude larger than the signal, the method automatically reverts back to the quality of Gumbel-max couplings and does not get significantly worse.

\subsection{Minimizing variance over random logits}

In this experiment, we consider the ability of our learned couplings to reduce variance in a controlled setting.
Specifically, we define a scalar ``reward'' $h(x)$ for each outcome $x$, and attempt to minimize the variance of $\E_{x \sim p} [h(x)] - \E_{y \sim q} [h(y)] = \E_{x,y \sim \pi} [h(x) - h(y)]$.
We explore both randomness in $p$ and $q$ and randomness in the reward $h$, and show that our gadgets achieve a lower variance than both the inverse CDF and Gumbel-max methods under non-monotonic reward functions.

In the first part, we fixed the reward to the identity function and randomized $p$ and $q$ in two ways: (1) independently randomly drawn $p$ and $q$, (2) $p$ drawn randomly and $q$ set to have the same probabilities as $p$ but assigned in a mirrored order. This tests the abilities of our gadgets to learn to couple arbitrary distributions and to uncover relationships between $p$ and $q$. 

In the second part, we test our two gadgets under varying monotonic and non-monotonic reward functions $h$, where $p$ and $q$ are fixed. In addition, we set $p$ to be increasing and $q$ to be decreasing (the reverse of $p$), in order to examine how the gadgets perform when $p$ is very different from $q$. The monotone function is constructed by sampling a K-length uniform vector followed by a cumulative summation. The non-monotone function is constructed by sampling K-length vector from a Gaussian distribution followed by the function $R(i) = \sin{(30*i)}$. At each trial, the gadgets are trained from scratch under new realization of rewards. In addition to the fixed mechanisms, we also solve the optimal coupling using a linear programming \citep{bertsimas1997introduction} (which may not be achievable by any implicit coupling).

\begin{table}[t]
    \centering
    \caption{Comparison of gadgets in controlled setting, shown with standard error.}
    {\small
    \begin{tabular}{r rr c r c r}
\toprule
Reward $h$: &\multicolumn{2}{c}{Fixed linear ($h(x) = x$)}
&&Random monotonic
&&Non-monotonic\\
\cmidrule{2-3}\cmidrule{5-5}\cmidrule{7-7}
$p$ and $q$: & Independent
& Mirrored
&& Fixed inc/dec
&& Fixed inc/dec\\
\midrule
Independent
& 16.51 $\pm$ 0.01 & 21.34 $\pm$ 0.10
&& 2.73 $\pm$ 0.39
&& 0.83 $\pm$ 0.10\\

Gumbel-max
& 14.02 $\pm$ 0.01 & 18.84 $\pm$ 0.09
&& 2.46 $\pm$ 0.35
&& 0.50 $\pm$ 0.06\\

CDF$^{-1}$
& 8.14 $\pm$ 0.01 & 12.59 $\pm$ 0.09
&& 0.60 $\pm$ 0.10
&& 0.91 $\pm$ 0.12\\

Optimal (LP)
& 8.14 $\pm$ 0.01 & 12.59 $\pm$ 0.09
&& 0.60 $\pm$ 0.10
&& 0.11 $\pm$ 0.02\\

\midrule
Gadget 1
& 14.05 $\pm$ 0.01 & 16.67 $\pm$ 0.45
&& 1.42 $\pm$ 0.22
&& 0.26 $\pm$ 0.03\\
Gadget 2
& 8.76 $\pm$ 0.03 & 13.47 $\pm$ 0.09
&& 2.00 $\pm$ 0.30
&& 0.21 $\pm$ 0.03\\
\bottomrule
\end{tabular}

    }
    \label{tab:variance-random-logits}
\end{table}

Table \ref{tab:variance-random-logits} shows the results of our experiments. We find that Gadget 2 shows strong performance across all distributions of $p$ and $q$, outperforming the Gumbel-max and independent sampling, and approaching the results of the optimal coupling under non-monotone $h$. Gadget 1 outperforms Gumbel-max in the mirrored setting, and also outperforms Gadget 2 in the fixed increasing $p$ / decreasing $q$ setting under monotonic $h$.

\begin{figure}
    \centering
    \begin{tabular}{cc}
    \begin{minipage}[t]{.2 \textwidth}
    \includegraphics[width=1.2in]{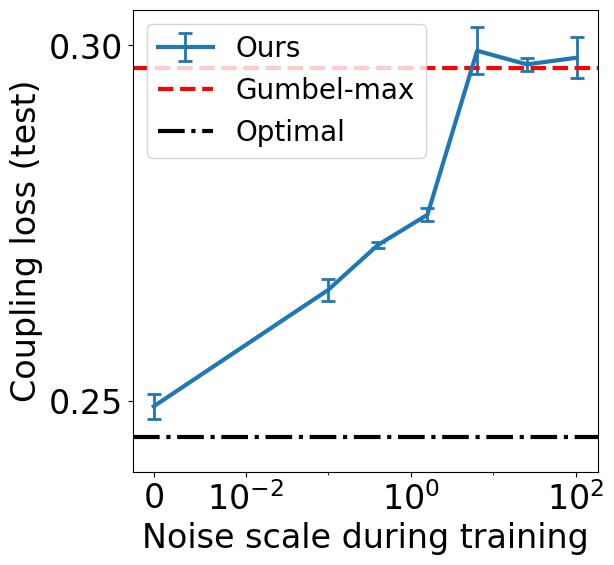}
    \end{minipage}
    &
    \begin{minipage}[t]{.9 \textwidth}
    \includegraphics[width=.85 \textwidth]{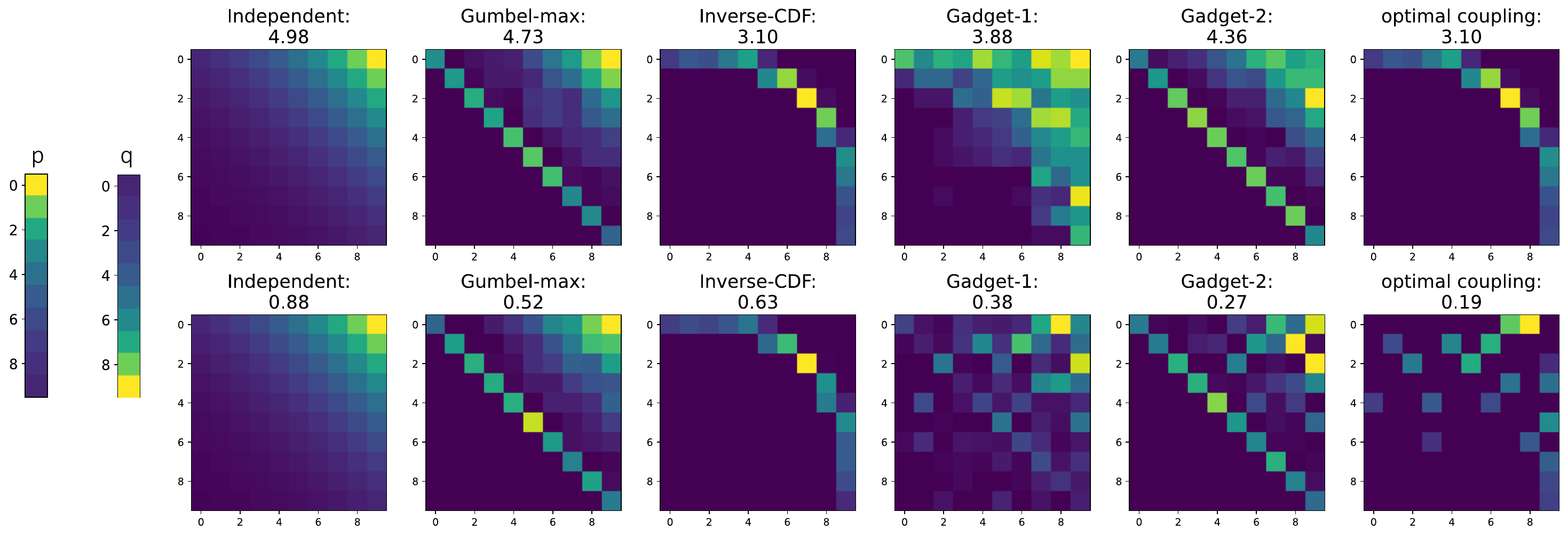}
    \end{minipage} \\
    (a) & (b)
    \end{tabular}    
    \caption{
    (a) When the training distribution is focused, we learn a near-maximal coupling. As the distribution becomes more diffuse, the learned coupling reverts to Gumbel-max.
    (b) Couplings of each method for the increasing $p$ / decreasing $q$ settings along with the counterfactual effect variance for this specific reward realizations. First row: monotone reward function. Second row: non-monotone reward function.}
    \label{fig:random-logits-reverse}
\end{figure}

\subsection{MDP counterfactual treatment effects}\label{sec:mdp_experiments}

In this experiment, we use a synthetic environment of sepsis management to minimize the variance of counterfactual treatment effects on an SCM for MDP.
Following \cite{oberst19a}, we used the simulator only for obtaining the initial observed trajectories and we do not assume access to the simulator to get counterfactual probabilities.

The simulator includes four vital signs (blood pressure, oxygen concentration, and glucose levels) with discrete states (low, normal, high), as well as three treatment options (antibiotics, vasopressors, and mechanical ventilation).
Our goal is to couple $p(s' | s, a_{doctor})$ and $p(s' | s, a_{intervention})$, i.e., the transition distributions induced by two policies: a behavior policy, which mimics the physician policy, and a target policy, which is the RL policy. Our counterfactual question is what would have happened to the patient if the RL policy's action had been applied instead of the doctor's.

Using a trained behavior policy, we draw 20,000 patient trajectories from the simulator with a maximum of 20 time steps, where the states are in a space of 6 discrete variables, each with different number of categories (146 states in total). Based on them, we learn the parameters of an MDP, and train the target policy over this MDP. 
Unlike \cite{oberst19a}, we focused on coupling single time steps. We set the total reward for each state by summing its discrete variables rewards, which were sampled from a Gaussian distribution. Among all the trajectories and time steps we filtered out all pairs that have less than 4 non-zero probabilities.
We conducted the experiment in two settings: joint sampling, where the noise is shared between the two samples, and counterfactual sampling, where we infer the noise $u^{(env)}$ based on the observation $(s, a_{doctor}, s')$, then sample $(s' | s, a_{intervention})$. In each setting, we tested our gadgets when $(p, q)$ are fixed and when $(p, q)$ are perturbed by a Gaussian sample (generalized).  At testing, we compute the treatment effect variance over 2000 samples and average the result across 10 different random seeds. 
With each trial of the experiment we fixed a pair of $(p, q)$ and set a new random realization of reward. We repeated that process for 6 pairs of $(p,q)$, each with 5 different reward realizations, a total of 30 trials for each experiment settings. 

The means and the standard errors of the experiments are shown in Figure \ref{fig:OS-var-ate}. Both our gadgets outperformed the fixed sampling mechanisms, and Gadget 2 did so by a significant margin under the counterfactual settings. Details on the implementation of all the experiments are in Appendix \ref{app:experiments-details}.

\begin{figure}
\hspace{-.2in}
\begin{tabular}{cccc}
\begin{minipage}{1.2in}{
    \includegraphics[width=1.5in]{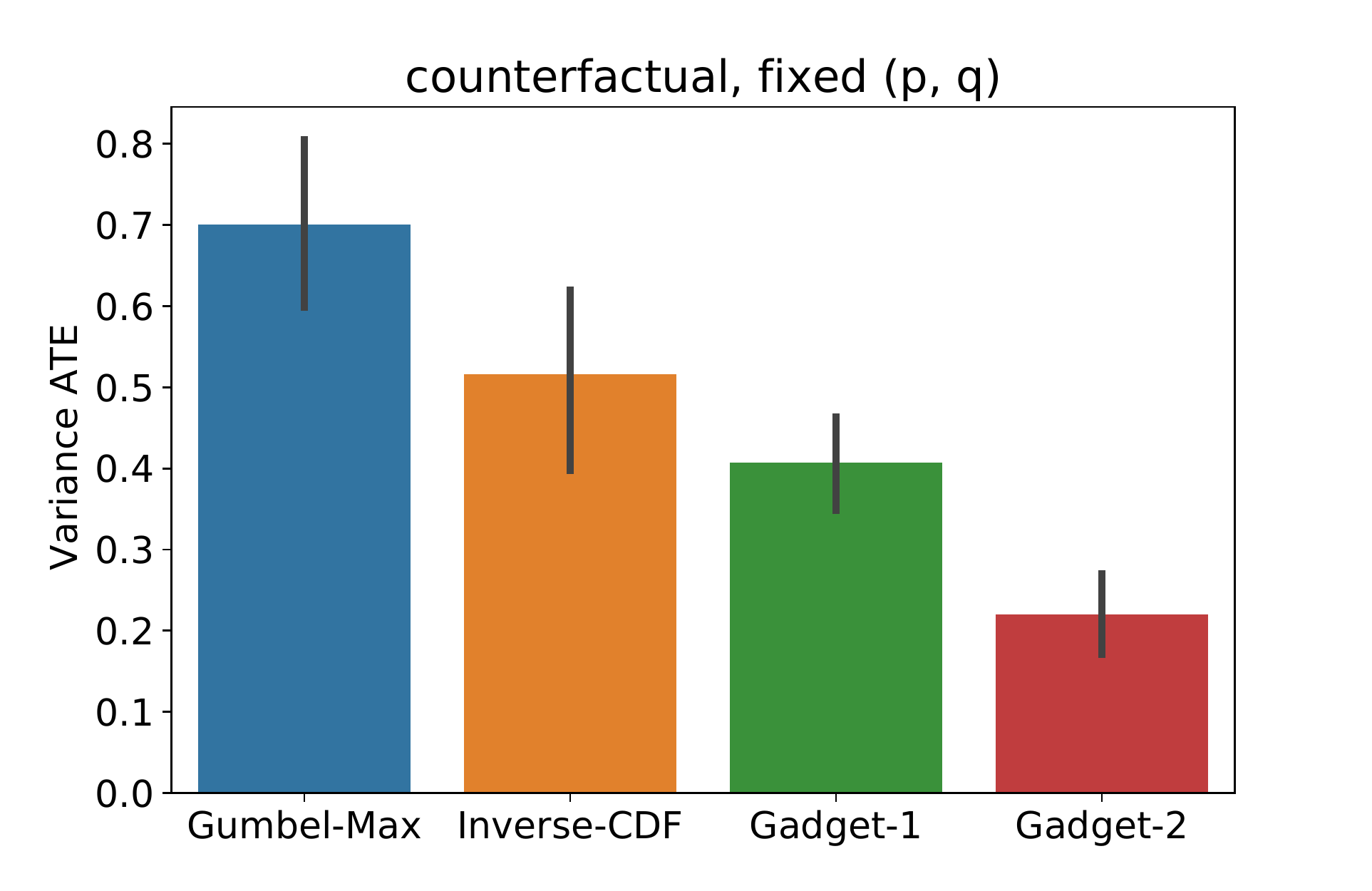}
    }\end{minipage} &
    \begin{minipage}{1.2in}{
    \includegraphics[width=1.5in]{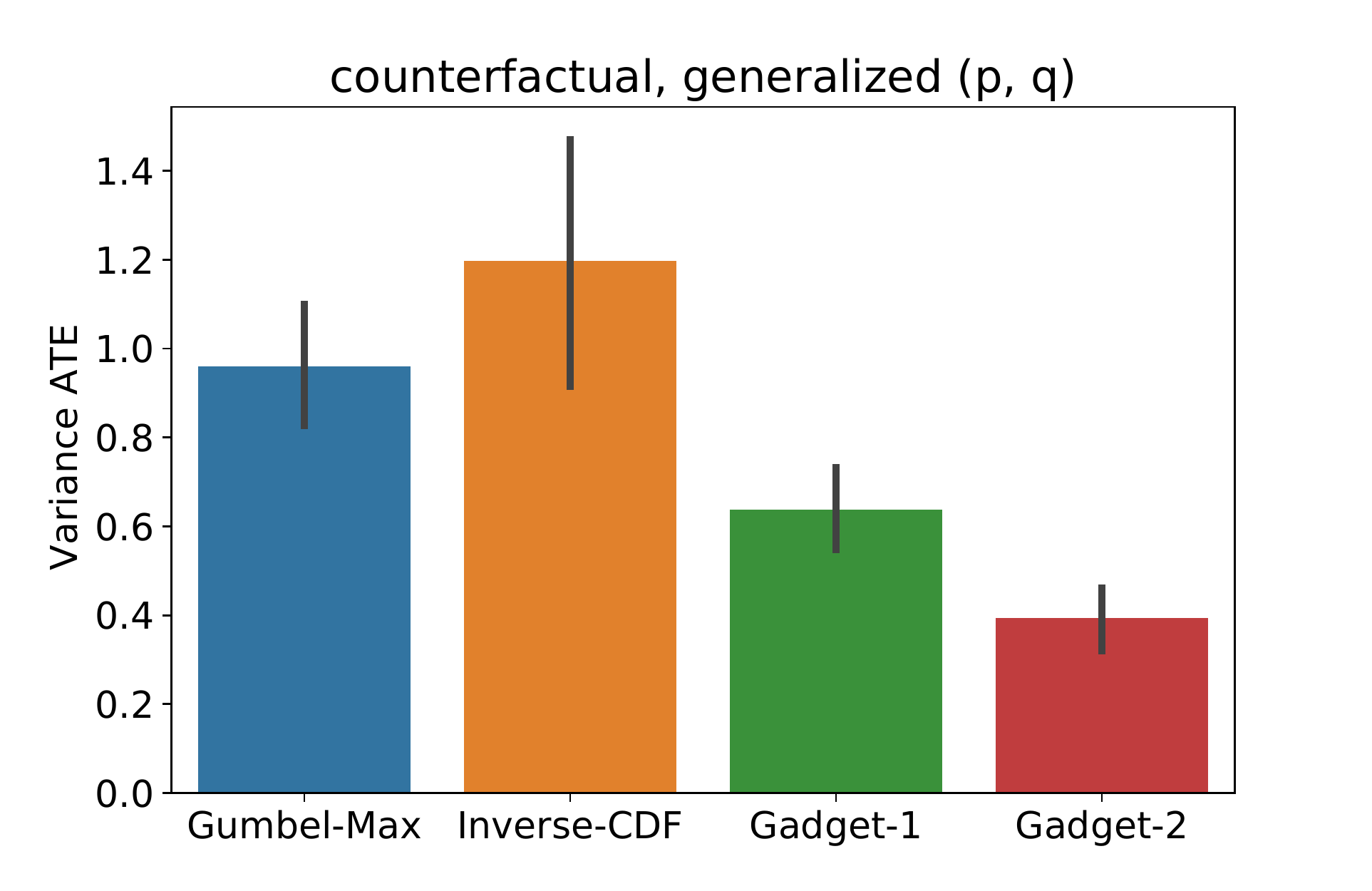}
    }\end{minipage} &
    \begin{minipage}{1.2in}{
    \includegraphics[width=1.5in]{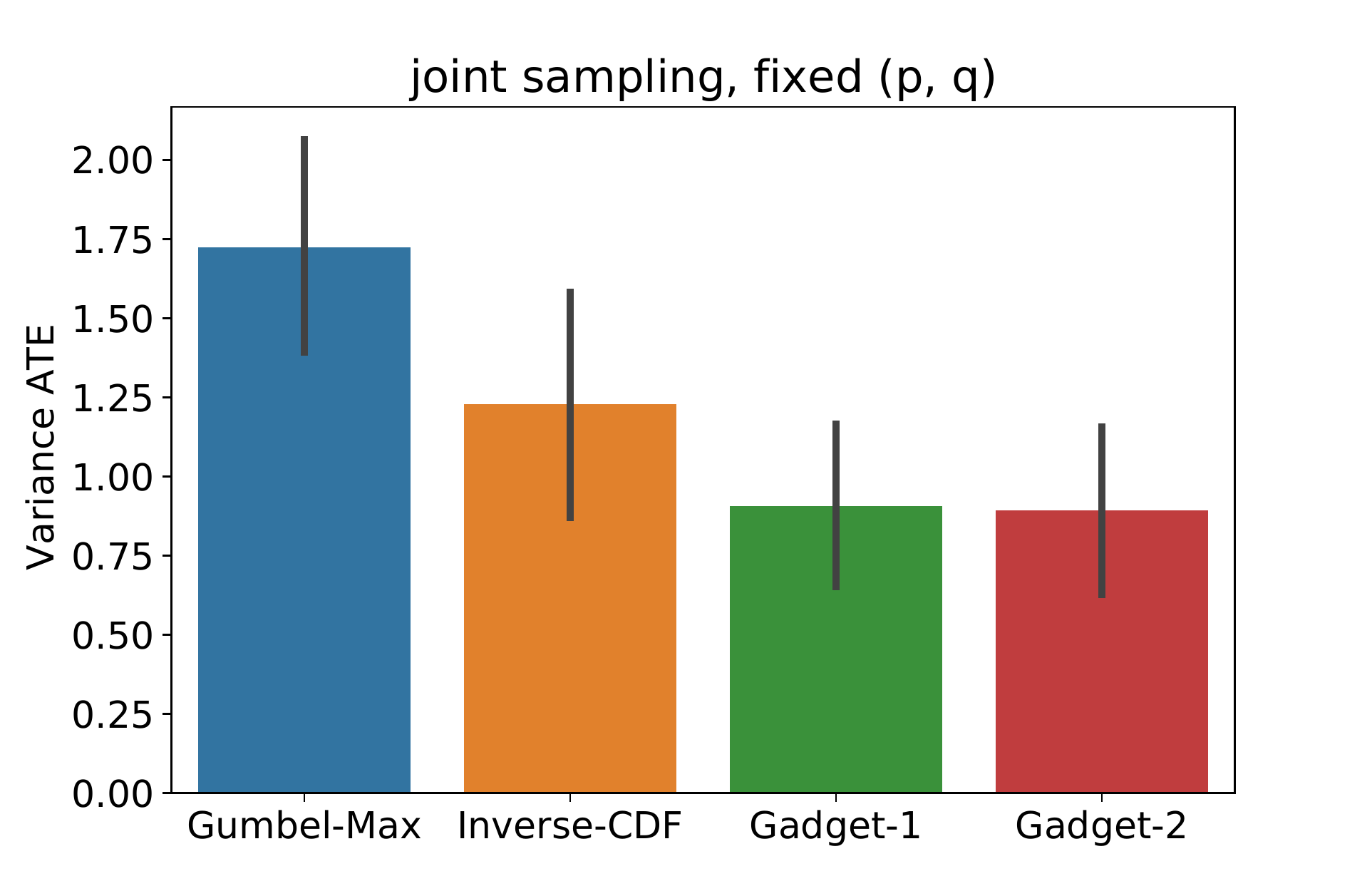}
    }\end{minipage} &
    \begin{minipage}{1.2in}{
    \includegraphics[width=1.5in]{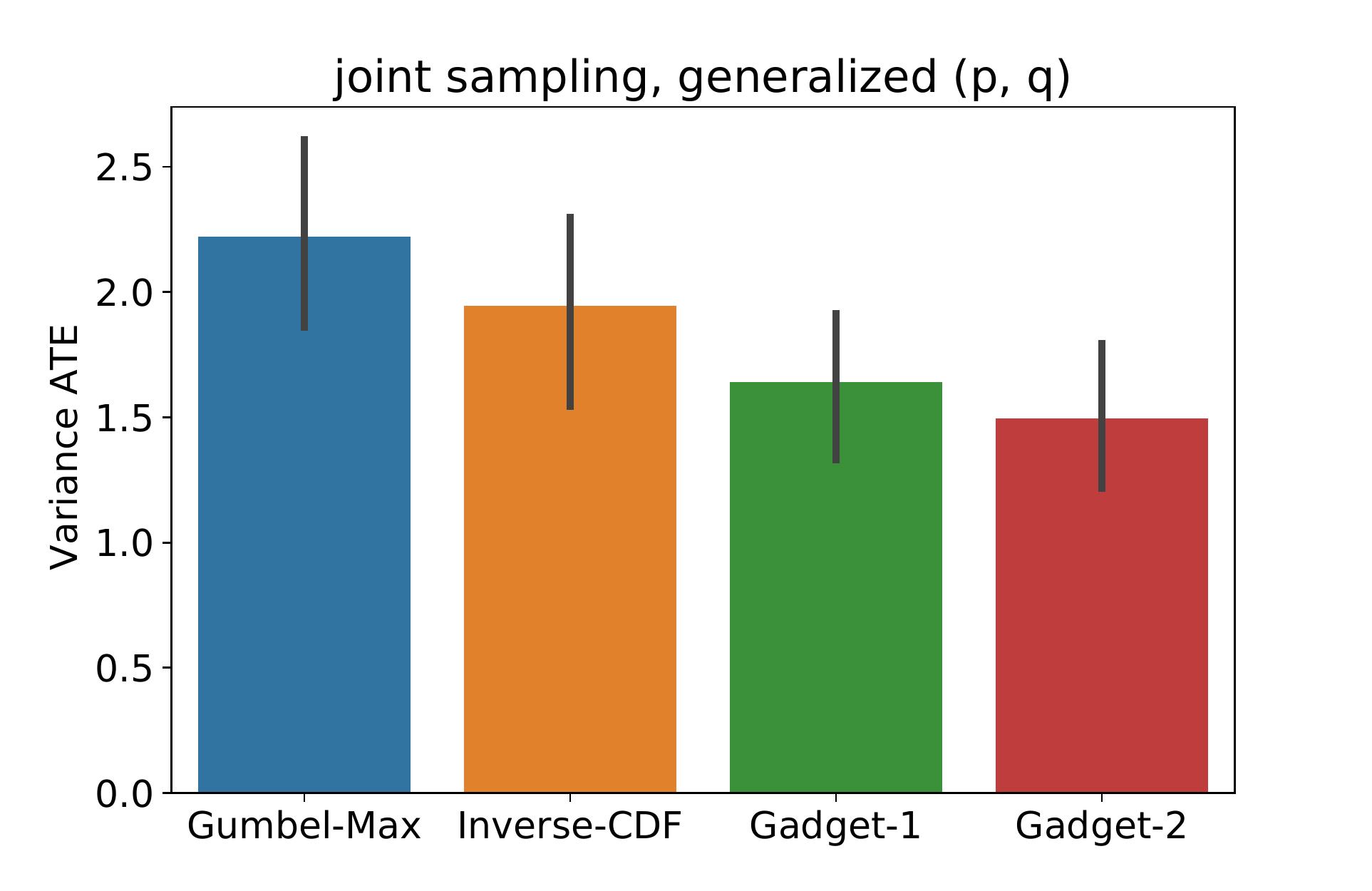}
    }\end{minipage} \\

\end{tabular}
\caption{Variance of the treatment effect on the sepsis management simulator.}
\label{fig:OS-var-ate}
\end{figure}

\section{Discussion}

We have presented methods for learning a level 3 causal mechanism that is consistent with observed level 2 information.
Our framework provides significant flexibility to quantitatively define what makes a causal mechanism desirable, and then uses learning to find the best mechanism.
Since any such choice cannot be confirmed or rejected by running an experiment, one might argue that choosing an SCM this way is unprincipled. However, principles such as counterfactual stability can still be encoded into our framework using the loss $g$. We thus see our gadgets as powerful tools which give modelers both the freedom and also the responsibility to select appropriate criteria for causal inference tasks, instead of being restricted to assumptions that cannot be tuned for a specific use case.

One limitation is that we have only considered single-step causal processes, but we believe the framework can be extended to multi-step MDPs.
In future work, we would also like to explore other settings of $\mathcal{D}$ and $g$ and investigate their qualitiative properties, such as how intuitive the resulting counterfactuals are to humans.

\section*{Acknowledgements}
We would like to thank the anonymous reviewers of our submission, whose excellent suggestions and requests for clarification were very helpful for improving the paper.
We would also like to thank Michael Oberst and David Sontag for providing their implementation of the sepsis simulator and off-policy evaluation logic, which we used for our experiments in Section \ref{sec:mdp_experiments}.

\bibliographystyle{plainnat}
\bibliography{references}

\begin{thebibliography}{32}
\providecommand{\natexlab}[1]{#1}
\providecommand{\url}[1]{\texttt{#1}}
\expandafter\ifx\csname urlstyle\endcsname\relax
  \providecommand{\doi}[1]{doi: #1}\else
  \providecommand{\doi}{doi: \begingroup \urlstyle{rm}\Url}\fi

\bibitem[Alvarez-Melis et~al.(2018)Alvarez-Melis, Jaakkola, and
  Jegelka]{alvarez2018structured}
David Alvarez-Melis, Tommi Jaakkola, and Stefanie Jegelka.
\newblock Structured optimal transport.
\newblock In \emph{International Conference on Artificial Intelligence and
  Statistics}, pages 1771--1780. PMLR, 2018.

\bibitem[Arjovsky et~al.(2017)Arjovsky, Chintala, and
  Bottou]{arjovsky2017wasserstein}
Martin Arjovsky, Soumith Chintala, and L{\'e}on Bottou.
\newblock Wasserstein gan. arxiv 2017.
\newblock \emph{arXiv preprint arXiv:1701.07875}, 30, 2017.

\bibitem[Aude et~al.(2016)Aude, Cuturi, Peyr{\'e}, and
  Bach]{aude2016stochastic}
Genevay Aude, Marco Cuturi, Gabriel Peyr{\'e}, and Francis Bach.
\newblock Stochastic optimization for large-scale optimal transport.
\newblock \emph{arXiv preprint arXiv:1605.08527}, 2016.

\bibitem[Bareinboim et~al.(2020)Bareinboim, Correa, Ibeling, and
  Icard]{bareinboim2020pearl}
Elias Bareinboim, JD~Correa, Duligur Ibeling, and Thomas Icard.
\newblock On pearl’s hierarchy and the foundations of causal inference.
\newblock \emph{ACM Special Volume in Honor of Judea Pearl (provisional
  title)}, 2020.

\bibitem[Benamou et~al.(2015)Benamou, Carlier, Cuturi, Nenna, and
  Peyr{\'e}]{benamou2015iterative}
Jean-David Benamou, Guillaume Carlier, Marco Cuturi, Luca Nenna, and Gabriel
  Peyr{\'e}.
\newblock Iterative bregman projections for regularized transportation
  problems.
\newblock \emph{SIAM Journal on Scientific Computing}, 37\penalty0
  (2):\penalty0 A1111--A1138, 2015.

\bibitem[Bertsimas and Tsitsiklis(1997)]{bertsimas1997introduction}
Dimitris Bertsimas and John~N Tsitsiklis.
\newblock \emph{Introduction to linear optimization}, volume~6.
\newblock Athena Scientific Belmont, MA, 1997.

\bibitem[Blondel et~al.(2018)Blondel, Seguy, and Rolet]{blondel2018smooth}
Mathieu Blondel, Vivien Seguy, and Antoine Rolet.
\newblock Smooth and sparse optimal transport.
\newblock In \emph{International Conference on Artificial Intelligence and
  Statistics}, pages 880--889. PMLR, 2018.

\bibitem[Bradbury et~al.(2018)Bradbury, Frostig, Hawkins, Johnson, Leary,
  Maclaurin, Necula, Paszke, Vander{P}las, Wanderman-{M}ilne, and
  Zhang]{jax2018github}
James Bradbury, Roy Frostig, Peter Hawkins, Matthew~James Johnson, Chris Leary,
  Dougal Maclaurin, George Necula, Adam Paszke, Jake Vander{P}las, Skye
  Wanderman-{M}ilne, and Qiao Zhang.
\newblock {JAX}: composable transformations of {P}ython+{N}um{P}y programs,
  2018.
\newblock URL \url{http://github.com/google/jax}.

\bibitem[Buesing et~al.(2018)Buesing, Weber, Zwols, Racaniere, Guez, Lespiau,
  and Heess]{buesing2018woulda}
Lars Buesing, Theophane Weber, Yori Zwols, Sebastien Racaniere, Arthur Guez,
  Jean-Baptiste Lespiau, and Nicolas Heess.
\newblock Woulda, coulda, shoulda: Counterfactually-guided policy search.
\newblock \emph{arXiv preprint arXiv:1811.06272}, 2018.

\bibitem[Cambanis et~al.(1976)Cambanis, Simons, and
  Stout]{Cambanis1976InequalitiesFE}
S.~Cambanis, G.~Simons, and W.~Stout.
\newblock Inequalities for ek(x, y) when the marginals are fixed.
\newblock \emph{Zeitschrift f{\"u}r Wahrscheinlichkeitstheorie und Verwandte
  Gebiete}, 36:\penalty0 285--294, 1976.

\bibitem[Courty et~al.(2016)Courty, Flamary, Tuia, and
  Rakotomamonjy]{courty2016optimal}
Nicolas Courty, R{\'e}mi Flamary, Devis Tuia, and Alain Rakotomamonjy.
\newblock Optimal transport for domain adaptation.
\newblock \emph{IEEE transactions on pattern analysis and machine
  intelligence}, 39\penalty0 (9):\penalty0 1853--1865, 2016.

\bibitem[Courty et~al.(2017)Courty, Flamary, and Ducoffe]{courty2017learning}
Nicolas Courty, R{\'e}mi Flamary, and M{\'e}lanie Ducoffe.
\newblock Learning wasserstein embeddings.
\newblock \emph{arXiv preprint arXiv:1710.07457}, 2017.

\bibitem[Cuturi(2013)]{cuturi2013sinkhorn}
Marco Cuturi.
\newblock Sinkhorn distances: Lightspeed computation of optimal transport.
\newblock \emph{Advances in neural information processing systems},
  26:\penalty0 2292--2300, 2013.

\bibitem[Fr{\'e}chet(1951)]{frechet1951tableaux}
Maurice Fr{\'e}chet.
\newblock Sur les tableaux de corr{\'e}lation dont les marges sont donn{\'e}es.
\newblock \emph{Ann. Univ. Lyon, 3\^{} e serie, Sciences, Sect. A},
  14:\penalty0 53--77, 1951.

\bibitem[Frogner et~al.(2015)Frogner, Zhang, Mobahi, Araya-Polo, and
  Poggio]{frogner2015learning}
Charlie Frogner, Chiyuan Zhang, Hossein Mobahi, Mauricio Araya-Polo, and Tomaso
  Poggio.
\newblock Learning with a wasserstein loss.
\newblock \emph{arXiv preprint arXiv:1506.05439}, 2015.

\bibitem[Glasserman and Yao(1992)]{glasserman1992some}
Paul Glasserman and David~D Yao.
\newblock Some guidelines and guarantees for common random numbers.
\newblock \emph{Management Science}, 38\penalty0 (6):\penalty0 884--908, 1992.

\bibitem[Grathwohl et~al.(2017)Grathwohl, Choi, Wu, Roeder, and
  Duvenaud]{grathwohl2017backpropagation}
Will Grathwohl, Dami Choi, Yuhuai Wu, Geoffrey Roeder, and David Duvenaud.
\newblock Backpropagation through the void: Optimizing control variates for
  black-box gradient estimation.
\newblock \emph{arXiv preprint arXiv:1711.00123}, 2017.

\bibitem[Jang et~al.(2017)Jang, Gu, and Poole]{jang2017gumbelsoftmax}
Eric Jang, Shixiang Gu, and Ben Poole.
\newblock {Categorical Reparameterization with Gumbel-Softmax}.
\newblock In \emph{International Conference on Learning Representations}, 2017.

\bibitem[Kingma and Ba(2014)]{kingma2014adam}
Diederik~P Kingma and Jimmy Ba.
\newblock Adam: A method for stochastic optimization.
\newblock \emph{arXiv preprint arXiv:1412.6980}, 2014.

\bibitem[Kusner et~al.(2015)Kusner, Sun, Kolkin, and
  Weinberger]{kusner2015word}
Matt Kusner, Yu~Sun, Nicholas Kolkin, and Kilian Weinberger.
\newblock From word embeddings to document distances.
\newblock In \emph{International conference on machine learning}, pages
  957--966. PMLR, 2015.

\bibitem[Lindvall et~al.(1999)]{lindvall1999strassen}
Torgny Lindvall et~al.
\newblock On strassen's theorem on stochastic domination.
\newblock \emph{Electronic communications in probability}, 4:\penalty0 51--59,
  1999.

\bibitem[Lu et~al.(2020)Lu, Huang, Wang, Hern{\'a}ndez-Lobato, Zhang, and
  Sch{\"o}lkopf]{lu2020sample}
Chaochao Lu, Biwei Huang, Ke~Wang, Jos{\'e}~Miguel Hern{\'a}ndez-Lobato, Kun
  Zhang, and Bernhard Sch{\"o}lkopf.
\newblock Sample-efficient reinforcement learning via counterfactual-based data
  augmentation.
\newblock \emph{arXiv preprint arXiv:2012.09092}, 2020.

\bibitem[Luise et~al.(2018)Luise, Rudi, Pontil, and
  Ciliberto]{luise2018differential}
Giulia Luise, Alessandro Rudi, Massimiliano Pontil, and Carlo Ciliberto.
\newblock Differential properties of sinkhorn approximation for learning with
  wasserstein distance.
\newblock \emph{arXiv preprint arXiv:1805.11897}, 2018.

\bibitem[Maddison et~al.(2014)Maddison, Tarlow, and Minka]{maddison2014astar}
Chris~J. Maddison, Daniel Tarlow, and Tom Minka.
\newblock {A* Sampling}.
\newblock In \emph{Advances in Neural Information Processing Systems 27}, 2014.

\bibitem[Maddison et~al.(2017)Maddison, Mnih, and Teh]{maddison2017concrete}
Chris~J. Maddison, Andriy Mnih, and Yee~Whye Teh.
\newblock {The Concrete Distribution: A Continuous Relaxation of Discrete
  Random Variables}.
\newblock In \emph{International Conference on Learning Representations}, 2017.

\bibitem[Oberst and Sontag(2019)]{oberst19a}
Michael Oberst and David Sontag.
\newblock Counterfactual off-policy evaluation with {G}umbel-max structural
  causal models.
\newblock In Kamalika Chaudhuri and Ruslan Salakhutdinov, editors,
  \emph{Proceedings of the 36th International Conference on Machine Learning},
  volume~97 of \emph{Proceedings of Machine Learning Research}, pages
  4881--4890. PMLR, 09--15 Jun 2019.
\newblock URL \url{http://proceedings.mlr.press/v97/oberst19a.html}.

\bibitem[Paszke et~al.(2019)Paszke, Gross, Massa, Lerer, Bradbury, Chanan,
  Killeen, Lin, Gimelshein, Antiga, Desmaison, Kopf, Yang, DeVito, Raison,
  Tejani, Chilamkurthy, Steiner, Fang, Bai, and Chintala]{NEURIPS2019_9015}
Adam Paszke, Sam Gross, Francisco Massa, Adam Lerer, James Bradbury, Gregory
  Chanan, Trevor Killeen, Zeming Lin, Natalia Gimelshein, Luca Antiga, Alban
  Desmaison, Andreas Kopf, Edward Yang, Zachary DeVito, Martin Raison, Alykhan
  Tejani, Sasank Chilamkurthy, Benoit Steiner, Lu~Fang, Junjie Bai, and Soumith
  Chintala.
\newblock Pytorch: An imperative style, high-performance deep learning library.
\newblock In \emph{Advances in Neural Information Processing Systems 32}, pages
  8024--8035. Curran Associates, Inc., 2019.
\newblock URL
  \url{http://papers.neurips.cc/paper/9015-pytorch-an-imperative-style-high-performance-deep-learning-library.pdf}.

\bibitem[Pearl(1999)]{pearl1999probabilities}
Judea Pearl.
\newblock Probabilities of causation: three counterfactual interpretations and
  their identification.
\newblock \emph{Synthese}, 121\penalty0 (1):\penalty0 93--149, 1999.

\bibitem[Pearl(2009)]{pearl2009causality}
Judea Pearl.
\newblock \emph{Causality}.
\newblock Cambridge university press, 2009.

\bibitem[Peyr{\'e} et~al.(2019)Peyr{\'e}, Cuturi,
  et~al.]{peyre2019computational}
Gabriel Peyr{\'e}, Marco Cuturi, et~al.
\newblock Computational optimal transport: With applications to data science.
\newblock \emph{Foundations and Trends{\textregistered} in Machine Learning},
  11\penalty0 (5-6):\penalty0 355--607, 2019.

\bibitem[Sinkhorn and Knopp(1967)]{sinkhorn1967concerning}
Richard Sinkhorn and Paul Knopp.
\newblock Concerning nonnegative matrices and doubly stochastic matrices.
\newblock \emph{Pacific Journal of Mathematics}, 21\penalty0 (2):\penalty0
  343--348, 1967.

\bibitem[Sutton et~al.(2017)Sutton, Barto, et~al.]{sutton2017reinforcement}
Richard~S Sutton, Andrew~G Barto, et~al.
\newblock Reinforcement learning: An introduction (in progress).
\newblock \emph{London, England}, 2017.

\end{thebibliography}


\begin{thebibliography}{37}
\providecommand{\natexlab}[1]{#1}
\providecommand{\url}[1]{\texttt{#1}}
\expandafter\ifx\csname urlstyle\endcsname\relax
  \providecommand{\doi}[1]{doi: #1}\else
  \providecommand{\doi}{doi: \begingroup \urlstyle{rm}\Url}\fi


\bibitem[Maddison \& Tarlow(2017)Maddison and Tarlow]{Maddison2017a}
Maddison, C.~J. and Tarlow, D.
\newblock {Gumbel Machinery}, 2017.
\newblock URL \url{https://cmaddis.github.io/gumbel-machinery}.

\bibitem[Buesing et~al.(2019)Buesing, Weber, Zwols, Heess, Racaniere, Guez, and
  Lespiau]{Buesing2018}
Buesing, L., Weber, T., Zwols, Y., Heess, N., Racaniere, S., Guez, A., and
  Lespiau, J.-B.
\newblock {Woulda, Coulda, Shoulda: Counterfactually-Guided Policy Search}.
\newblock In \emph{International Conference on Learning Representations}, 2019.
\newblock URL \url{https://openreview.net/forum?id=BJG0voC9YQ}.


\end{thebibliography}

\newpage

\appendix
\section{Diagrams of Gadget 1 and Gadget 2}

\begin{figure}[h!]
    \centering
    \includegraphics[width=0.85\textwidth]{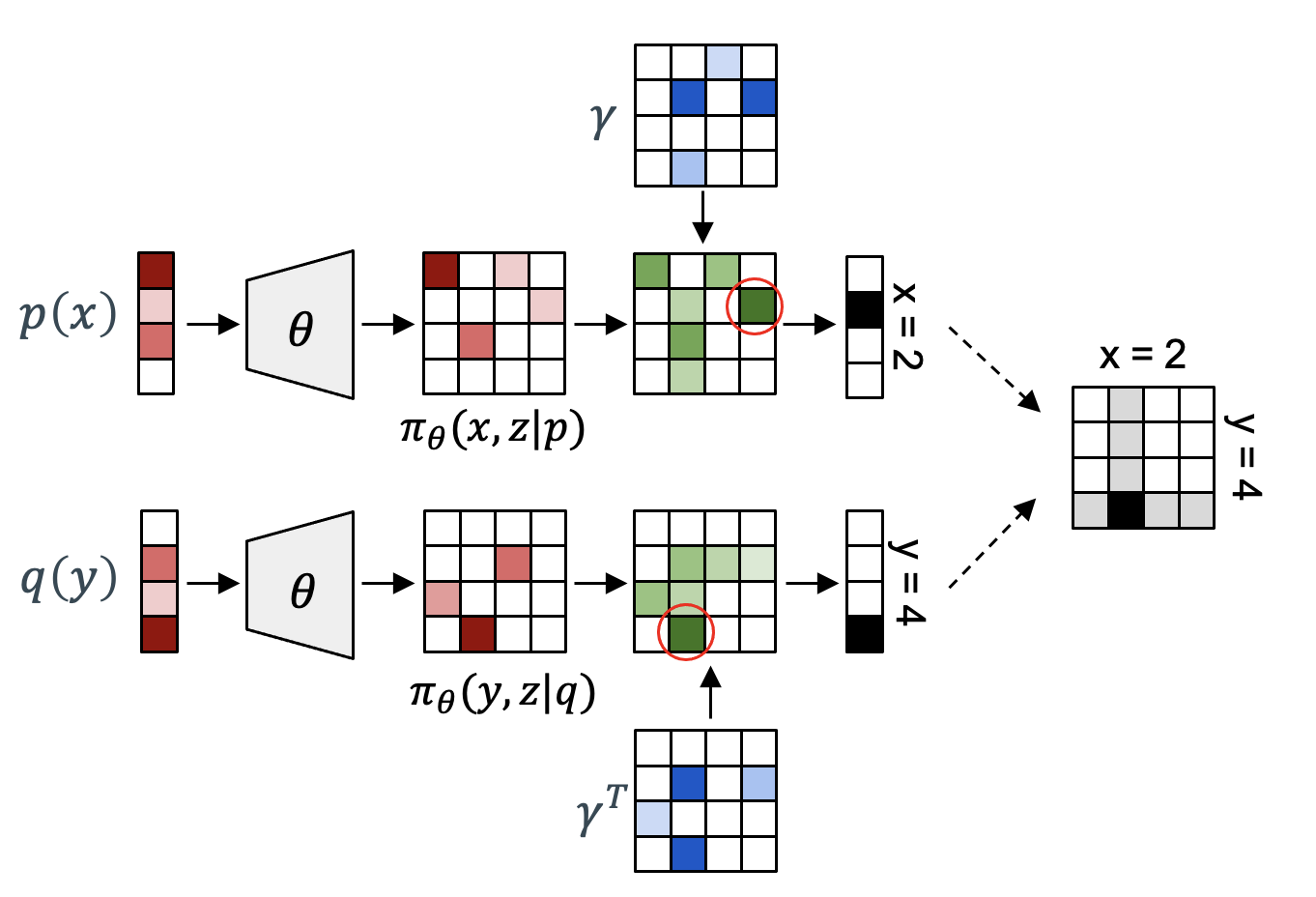}
    \caption{Diagram of Gadget 1 applied to an observed distribution $p(x)$ and counterfactual distribution $q(x)$, resulting in a coupled pair of observations. Note that $\gamma$ is transposed.}
    \label{fig:gadget1}
\end{figure}
\begin{figure}[h!]
    \centering
    \includegraphics[width=0.85\textwidth]{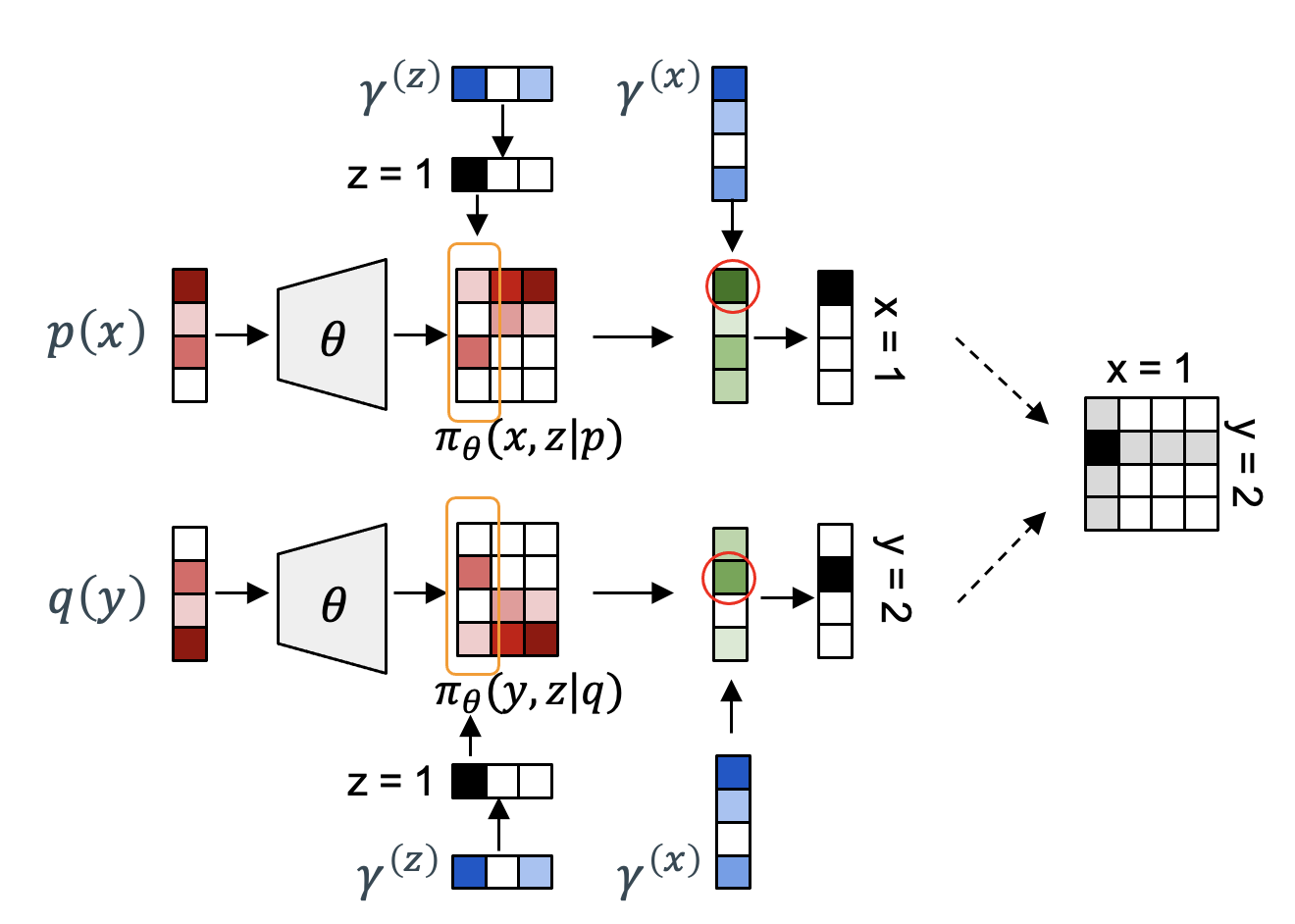}
    \caption{Diagram of Gadget 2 applied to an observed distribution $p(x)$ and counterfactual distribution $q(x)$, resulting in a coupled pair of observations. Note that $\gamma^{(z)}$, $z$, and $\gamma^{(x)}$ are shared.}
    \label{fig:gadget2}
\end{figure}

\section{Proof of expected difference of costs equation for Gumbel-max SCM}\label{app:gumbel-max-ate}
In this section we show that
\begin{equation}
\E_{x,y \sim \pi} [h_1(x) - h_2(y)] = \sum_{y} q(y) \E_{\gamma \sim (G_2 | y)} \Big[h_1(\argmax_{k} [l^{(1)}_k - l^{(2)}_k + \gamma_k]) - h_2(\argmax_{k} \left[\gamma_k\right])\Big]. \nonumber
\end{equation}
Here $G_2 | y$ is the Gumbel distribution with location $\log l^{(2)}_k$ conditioned on the event that $y$ is the maximal argument.
We start by observing that
\[
\E_{x,y \sim \pi} [h_1(x) - h_2(y)] = \E_{x \sim p} [h_1(x)] - \E_{y \sim q} [h_2(y)] 
\nonumber
\]
Considering each term separately, we find
\begin{eqnarray*}
\E_{y \sim p} [h_1(y)] &=& \E_{\gamma \sim G_1} \Big[  h_1(\argmax_{k} \left[\gamma_k\right]) \Big] = \E_{\gamma \sim G_0} \Big[  h_1(\argmax_{k} \left[\gamma_k + l^{(1)}_k \right]) \Big] \\
&=& \E_{\gamma \sim G_0} \Big[ h_1(\argmax_{k} \left[\gamma_k + l^{(2)}_k+ (l^{(1)}_k - l^{(2)}_k)\right]) \Big] \\
&=& \E_{\gamma \sim G_2} \Big[  h_1(\argmax_{k} \left[\gamma_k + (l^{(1)}_k - l^{(2)}_k)\right]) \Big] \\
&=& \E_{\gamma \sim G_2} \Big[ \sum_{y} 1[y = \argmax_{k} \left[\gamma_k\right]]  h_1(\argmax_{k} \left[\gamma_k + (l^{(1)}_k - l^{(2)}_k)\right] \Big] \\
&=& \sum_{y} q(y) \E_{\gamma \sim (G_2 | y)} \Big[ \textstyle\frac{1[y = \argmax_{k} \left[\gamma_k\right]]}{q(y)}  h_1(\argmax_{k} \left[\gamma_k + (l^{(1)}_k - l^{(2)}_k)\right]) \Big] \\
&=& \sum_{y} q(y) \E_{\gamma \sim (G_2 | y)} \Big[h_1(\argmax_{k} \left[\gamma_k + (l^{(1)}_k - l^{(2)}_k)\right]) \Big] \\
\E_{y \sim q} [h_2(y)] &=& \E_{\gamma \sim G_2} \Big[ h_1(\argmax_{k} \left[\gamma_k \right]) \Big] \\
 &=& \E_{\gamma \sim G_2} \Big[ \sum_{y} 1[y = \argmax_{k} \left[\gamma_k\right]]  h_1(\argmax_{k} \left[\gamma_k \right]) \Big] \\
 &=& \sum_{y} q(y)\E_{\gamma \sim (G_2|y)} \Big[ h_1(\argmax_{k} \left[\gamma_k \right]) \Big]
\end{eqnarray*}
Combining these terms yields the desired equation.

 \section{Probability of $x = y$ in Gumbel-max Coupling}
 \label{sec:p_x_eq_y_gumbel_max}
\propone*
 \begin{proof}
 The event that $x = y = i$ is equivalent to 
 \begin{align}
 \log p(i) + \gamma(i) &> \log p(j) + \gamma(j) \hbox{ for all $j \not = i$ and} \\ 
 \log q(i) + \gamma(i) &> \log q(j) + \gamma(j) \hbox{ for all $j \not = i$}.
 \end{align}
 Or equivalently, 
 \begin{align}
 \gamma(i) &> \log p(j) - \log p(i) + \gamma(j) \hbox{ for all $j \not = i$, and} \\
 \gamma(i) &> \log q(j) - \log q(i) + \gamma(j) \hbox{ for all $j \not = i$}.
 \end{align}
 Only one inequality per $j$ can be active, so we can combine to get the equivalent event
 \begin{align}
 \gamma(i) > \log \max(\frac{p(j)}{p(i)}, \frac{q(j)}{q(i)}) + \gamma(j) \hbox{ for all $j \not = i$}.
 \end{align}
 By the Gumbel-max trick, the probability of this event is a softmax $\frac{1}{1 + \sum_{j \not = i} \max(\frac{p(j)}{p(i)}, \frac{q(j)}{q(i)})}$.
 \end{proof}

 \section{Suboptimality of Gumbel-max Couplings}
 \label{sec:suboptimality_of_gumbel_max}
 \corrone*
 \begin{proof}
 The probability that $x = y = i$ in a maximal coupling is $\min(p(i), q(i))$. 
 Note that we can rewrite
 \begin{align}
 p(i) & = \frac{p(i)}{\sum_j p(j)} = \frac{1}{1 + \sum_{j \not = i} \frac{p(j)}{p(i)}} \\
 q(i) & = \frac{q(i)}{\sum_j q(j)} = \frac{1}{1 + \sum_{j \not = i} \frac{q(j)}{q(i)}}, \hbox{\qquad and} \\
 \min(p(i), q(i)) & = \frac{1}{1 + \max(\sum_{j \not = i} \frac{p(j)}{p(i)}, \sum_{j \not = i} \frac{q(j)}{q(i)})}.
 \end{align}
 Comparing this expression to Proposition 1, this means Gumbel-max couplings are suboptimal when 
 \begin{align}
  \max(\sum_{j \not = i} \frac{p(j)}{p(i)}, \sum_{j \not = i} \frac{q(j)}{q(i)}) < 
 \sum_{j \not = i} \max(\frac{p(j)}{p(i)}, \frac{q(j)}{q(i)}).
 \end{align}
 \end{proof}

 \section{Constant Factor Approximation of Gumbel-max Couplings}
 \label{sec:constant_factor_approx}
 \corrtwo*
 \begin{proof}
 First expand out the above equations and replace 1 with $\frac{p(i)}{p(i)}$ or $\frac{q(i)}{q(i)}$:
 \begin{align}
 \frac{
     \min(p(i), q(i))
 }{
     \pi_{gm}(x=i,y=i)
 }
 = 
 \frac{
     1 + \sum_{j \not = i} \max(\frac{p(j)}{p(i)}, \frac{q(j)}{q(i)})
 }{
     1 + \max(\sum_{j \not = i} \frac{p(j)}{p(i)}, \sum_{j \not = i} \frac{q(j)}{q(i)})
 }
 = 
 \frac{
     \sum_{j} \max(\frac{p(j)}{p(i)}, \frac{q(j)}{q(i)})
 }{
   \max(\sum_{j} \frac{p(j)}{p(i)}, \sum_{j} \frac{q(j)}{q(i)})
 }
 \end{align}
 Then we can prove the bound, leveraging the fact that all $p$ and $q$ are positive:
 \begin{align}
 \frac{
     \sum_{j} \max(\frac{p(j)}{p(i)}, \frac{q(j)}{q(i)})
 }{
     \max(\sum_{j} \frac{p(j)}{p(i)}, \sum_{j} \frac{q(j)}{q(i)})
 }
 \le
 \frac{
     \sum_{j} \frac{p(j)}{p(i)} + \sum_{j} \frac{q(j)}{q(i)}
 }{
     \max(\sum_{j} \frac{p(j)}{p(i)}, \sum_{j} \frac{q(j)}{q(i)})
 }
 \le
 \frac{
     2 \max(\sum_{j} \frac{p(j)}{p(i)}, \sum_{j} \frac{q(j)}{q(i)})
 }{
     \max(\sum_{j} \frac{p(j)}{p(i)}, \sum_{j} \frac{q(j)}{q(i)})
 }
 = 2.
 \end{align}
 Since this is true for each $i$, it implies that the total probability of $x = y$ in the maximal coupling is at most twice that of the Gumbel-max coupling.
 \end{proof}

 \section{Proofs of Impossibility of Maximal Couplings}
 \label{sec:impossibility_proofs}
 \begin{proposition}
 Let $\Omega$ be a probability space with measure $\mu$, and $F_p : \Omega \to \{1,2,3\}$ be a family of functions, indexed by a marginal distribution $p \in \Delta^3$ over three choices, that maps events $\bu \in \Omega$ into three outcomes. Let $F_p^{-1}(i) = \{\bu \in \Omega ~:~ F_p(\bu) = i\}$. Then the following cannot both be true:
 \begin{enumerate}
     \item $F$ assigns correct marginals, i.e.\[\mu(F_p^{-1}(1)) = p_1, \qquad\mu(F_p^{-1}(2)) = p_2, \qquad\mu(F_p^{-1}(3)) = p_3.\]
     \item $\mu(F_{p}^{-1}(i) \cap F_{q}^{-1}(i)) = \min(\mu(F_{p}^{-1}(i)), \mu(F_{q}^{-1}(i)))$ for all $p, q, i$. Note that this implies $F_{q}^{-1}(i) = F_{p}^{-1}(i)$ whenever $q_i = p_i$, except possibly on a set of measure zero.

 \end{enumerate}
 \end{proposition}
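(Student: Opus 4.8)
The plan is to use the reduction already flagged in the statement — that condition~2 forces $F_q^{-1}(i)$ to equal $F_p^{-1}(i)$ up to a $\mu$-null set whenever $q_i = p_i$ — and then to derive a contradiction by expressing $\Omega$ as a disjoint union of two such sets in three mutually incompatible ways.

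First I would make the reduction precise. If $p_i = q_i =: t$, then $\mu(F_p^{-1}(i)) = \mu(F_q^{-1}(i)) = t$ by condition~1, while condition~2 gives $\mu(F_p^{-1}(i)\cap F_q^{-1}(i)) = t$; since $\mu$ is a probability measure this forces $\mu(F_p^{-1}(i)\,\triangle\,F_q^{-1}(i)) = 0$. Hence for each outcome $i$ and each $t\in[0,1]$ there is a measurable set $A_i(t)$, determined up to null sets, such that $F_p^{-1}(i) = A_i(t)$ mod null whenever $p_i = t$, and $\mu(A_i(t)) = t$ by condition~1. The second ingredient is entirely combinatorial: for a fixed $p\in\Delta^3$ the fibres $F_p^{-1}(1), F_p^{-1}(2), F_p^{-1}(3)$ partition $\Omega$, so $A_1(p_1), A_2(p_2), A_3(p_3)$ partition $\Omega$ up to null sets.

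Then I would instantiate this at three distributions. Taking $p = (\tfrac12,\tfrac12,0)$ and using $\mu(A_3(0)) = 0$ shows $A_1(\tfrac12)$ and $A_2(\tfrac12)$ are disjoint mod null and cover $\Omega$ mod null. Taking $p = (\tfrac12,0,\tfrac12)$ shows, in the same way, $A_1(\tfrac12) = \Omega\setminus A_3(\tfrac12)$ mod null, and $p = (0,\tfrac12,\tfrac12)$ shows $A_2(\tfrac12) = \Omega\setminus A_3(\tfrac12)$ mod null. The latter two force $A_1(\tfrac12) = A_2(\tfrac12)$ mod null, which together with the first statement (disjoint, union $\Omega$) forces $\mu(A_1(\tfrac12)) = 0$; but $\mu(A_1(\tfrac12)) = \tfrac12$ by condition~1, a contradiction, so conditions~1 and~2 cannot both hold.

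I expect the main obstacle to be not the calculation but the choice of configuration: one has to notice that condition~2 collapses all dependence of $F_p^{-1}(i)$ on $p$ down to the single number $p_i$, and then find distributions that place the \emph{same} set $A_3(\tfrac12)$ in two roles forcing two of its ``complements'' to be disjoint; the three face-centers $(\tfrac12,\tfrac12,0)$, $(\tfrac12,0,\tfrac12)$, $(0,\tfrac12,\tfrac12)$ of $\Delta^3$ do exactly this, and they are only available because there are at least three outcomes — with two outcomes maximal couplings do exist, matching the known maximality of binary Gumbel-max. The only genuine bit of care is the mod-null bookkeeping: discarding the finitely many exceptional null sets simultaneously, and the elementary lemma $\mu(A)=\mu(B)=\mu(A\cap B)\Rightarrow\mu(A\triangle B)=0$ for a probability measure. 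If $\Delta^3$ is intended to be the open simplex, one replaces the zero coordinates by a small $\varepsilon>0$ and passes to a limit (the measure of the offending set stays at least $\tfrac12 - O(\varepsilon)$).
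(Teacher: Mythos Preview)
Your argument is correct and follows essentially the same route as the paper's proof: both instantiate the three face-center distributions $(\tfrac12,\tfrac12,0)$, $(\tfrac12,0,\tfrac12)$, $(0,\tfrac12,\tfrac12)$, use condition~2 to identify preimages sharing a coordinate, and obtain a contradiction from two half-measure sets being simultaneously equal and disjoint. Your version is a bit more explicit about the mod-null bookkeeping and the abstraction $A_i(t)$, whereas the paper treats the set equalities literally, but the structure is the same.
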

 \begin{proof}
 Suppose $F$ satisfies both. Let
 \begin{align*}
 A_{12} &= F_{[\frac{1}{2},\frac{1}{2},0]}^{-1}(1),
 &
 B_{12}&= F_{[\frac{1}{2},\frac{1}{2},0]}^{-1}(2),
 &
 C_{13} &= F_{[\frac{1}{2},0,\frac{1}{2}]}^{-1}(3),
 \\
 A_{13} &= F_{[\frac{1}{2},0,\frac{1}{2}]}^{-1}(1),
 &
 B_{23} &= F_{[0,\frac{1}{2},\frac{1}{2}]}^{-1}(2),
 &
 C_{23} &= F_{[0,\frac{1}{2},\frac{1}{2}]}^{-1}(3),
 \end{align*}
 From  1, all of these sets have measure $\frac{1}{2}$, and from 2, we know $A_{12} = A_{13}, B_{12}=B_{23}, C_{13}=C_{23}$. Also, $A_{12} \sqcup B_{12} = A_{13} \sqcup C_{13} = \Omega$, so we must have $B_{12} = C_{13}$. But then $B_{12} = B_{12}\sqcup C_{13} = B_{23} \sqcup C_{23} = \Omega$ and so $\mu(\Omega) = \frac{1}{2
 }$, which is impossible since $\mu(\Omega) = 1$.
 \end{proof}
\proptwo*
 \begin{proof}
 Suppose such an algorithm existed. Choose $\Omega = \R^D$ with the appropriate measure $\mu$, and let $F_p(\bu) = f_\theta(\bu, \log p)$. By assumption, the algorithm must satisfy 1, so there must be some $p$, $q$ and $i$ that do not satisfy 2. But then $f_\theta$ does not produce a maximal coupling between $p$ and $q$: in particular we must have $\pi_\theta(i, i) \ne \min(p(i), q(i))$.
 \end{proof}

\section{Sampling from counterfactual distributions in Gadget 1}
\label{app:gadget1-counterfactuals}
To use the gadget in a counterfactual setting, we need to condition on an outcome $x^{(obs)}$ and then draw a counterfactual sample of $y$. 
As in the counterfactual sampling algorithm for Gumbel-max couplings \citep{oberst19a}, we rely on top-down Gumbel sampling \citep{maddison2014astar}, but here we require a slightly more elaborate two-step construction.
Due to properties of Gumbels, we know that $[\gamma_1(p)]_x \sim \Gumbel(\log \sum_z \pi_\theta(x, z | p))$. We can then sample the $\gamma_1(p)$ conditioned on $x^{(obs)}$ being the argmax using top-down sampling. Next, using the independence of the argmax and max, we can independently sample $z_x^* = \argmax_z \gamma_{x,z} + \log \pi_\theta(x, z | p)$ for each $x$ by sampling from the distribution $\pi_\theta(x | p) = \sum_z \pi_\theta(x, z | p)$. Finally, we combine $[\gamma_1(p)]_x$ (the maximum over $z$) and $z_x^*$ (the argmax) using top-down sampling to obtain the values of $\gamma_{x,z}$. The result is a posterior sample of the full $K \times K$ matrix $\gamma_{x,y}$ of exogenous Gumbel noise that is consistent with the behavior policy.
To get a counterfactual sample, transpose the $\gamma_{x,z}$'s and run the $q$ gadget forward to sample $y$.

 \section{Proof that the accept-reject correction in Gadget 2 produces the desired marginals}\label{app:accept-reject-correction}
 Here we show that the correction step described in Section \ref{sec:gadget-two} produces the correct marginals. We start by rewriting the correction in terms of possibly unnormalized log probabilities $l$, for which $p(x) \propto \exp l_x$. Note that we omit the normalization constant in the definition of $c_x$ (writing $\exp l_x$ instead of $p(x) = \frac{\exp l_x}{\sum_{x'} \exp l_{x'}}$), as it turns out to be unnecessary for ensuring correctness.
 \begin{align*}
 \sum_x A_{x,z} &= \pi_\theta(z),
 &
 c_x &= \frac{\exp l_x}{\sum_z A_{x, z}},
 &
 c_* &= \max_x c_x,
 &
 d_z &= \frac{\sum_x A_{x, z} c_x}{\pi_\theta(z)},
 \end{align*}
 \begin{align*}
 \pi_\theta(x|z, l) &=\frac{c_x}{c_*}\cdot \frac{A_{x, z}}{\pi_\theta(z)} + \left(1 - \frac{d_z}{c_*}\right) \frac{\exp l_x}{\sum_{x'} \exp l_{x'}}.
 \end{align*}
 We now show that defining $\pi_\theta(x|z, l)$ in this way produces samples from the appropriate distribution.
 \begin{proposition}
 $\pi_\theta(z) \pi_\theta(x|z, l)$ has the correct marginals, i.e.
 \[
 \sum_z \pi_\theta(z) \pi_\theta(x|z, l) = \frac{\exp l_x}{\sum_{x'} \exp l_{x'}}.
 \]
 \end{proposition}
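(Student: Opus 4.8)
The plan is a direct computation: substitute the given formula for $\pi_\theta(x\mid z,l)$ into the left-hand side $\sum_z \pi_\theta(z)\,\pi_\theta(x\mid z,l)$, split the sum into its two additive contributions, and simplify each using only the defining identities among $A$, $c_x$, $c_*$, $d_z$ and $\pi_\theta(z)$. No clever idea is needed; the content is a single cancellation, and the writeup amounts to tracking it carefully.

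First I would handle the ``deterministic'' term $\sum_z \pi_\theta(z)\cdot\frac{c_x}{c_*}\cdot\frac{A_{x,z}}{\pi_\theta(z)} = \frac{c_x}{c_*}\sum_z A_{x,z}$. The definition $c_x = \exp l_x / \sum_z A_{x,z}$ rearranges to $\sum_z A_{x,z} = \exp l_x / c_x$, so this term collapses to $\exp l_x / c_*$. Next I would handle the ``mixing'' term $\frac{\exp l_x}{\sum_{x'}\exp l_{x'}}\sum_z \pi_\theta(z)\bigl(1-\tfrac{d_z}{c_*}\bigr)$: using $\sum_z \pi_\theta(z) = 1$ and $\sum_z \pi_\theta(z)\,d_z = \sum_z \sum_x A_{x,z}c_x = \sum_x c_x \sum_z A_{x,z} = \sum_x \exp l_x$ (again by the definition of $c_x$), the inner sum equals $1 - \frac{\sum_{x'}\exp l_{x'}}{c_*}$, so the mixing term becomes $\frac{\exp l_x}{\sum_{x'}\exp l_{x'}} - \frac{\exp l_x}{c_*}$.

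Adding the two contributions, the $\pm\,\exp l_x / c_*$ pieces cancel, leaving exactly $\frac{\exp l_x}{\sum_{x'}\exp l_{x'}}$, as claimed; this also explains why the normalization constant inside $c_x$ is irrelevant, since $c_*$ enters only through this exact cancellation. The main thing worth noting is that there is no real obstacle here: the only slightly subtle point, and one implicit in the surrounding text's claim that the construction ``produces samples from the appropriate distribution,'' is that $\pi_\theta(x\mid z,l)$ is itself a valid distribution over $x$. Nonnegativity of the mixing weight $1-\frac{d_z}{c_*}$ follows because $d_z$ is a convex combination of the $c_x$ (the weights $A_{x,z}/\pi_\theta(z)$ sum to $1$ since $\sum_x A_{x,z} = \pi_\theta(z)$), hence $d_z \le \max_x c_x = c_*$; and summing the formula over $x$ gives $\frac{d_z}{c_*} + \bigl(1-\frac{d_z}{c_*}\bigr) = 1$. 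I would include these two checks as a short remark after the main identity, but they are not required for the stated marginal claim.
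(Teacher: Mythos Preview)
Your proposal is correct and follows essentially the same direct computation as the paper's proof: both substitute the definition of $\pi_\theta(x\mid z,l)$, split into the two additive pieces, and use the identities $c_x\sum_z A_{x,z}=\exp l_x$ and $\sum_z\pi_\theta(z)\,d_z=\sum_x\exp l_x$ to obtain the cancellation. Your organization (evaluate each term to closed form, then add) is slightly more streamlined than the paper's (regroup to isolate the target and then show the residual $V_x$ vanishes), and your extra remark that $\pi_\theta(x\mid z,l)$ is a bona fide distribution is a useful addition not present in the original.
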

 \begin{proof}
 Expanding the right hand side,
 \begin{align*}
 \sum_z \pi_\theta(z) \pi_\theta(x|z, l) &=\sum_z \pi_\theta(z) \left[ \frac{c_x}{c_*}\cdot \frac{A_{x, z}}{\pi_\theta(z)} + \left(1 - \frac{d_z}{c_*}\right) \frac{\exp l_x}{\sum_{x'} \exp l_{x'}} \right]
 \\&= \frac{\exp l_x}{\sum_{x'} \exp l_{x'}} + \frac{1}{c_*}\sum_z \pi_\theta(z)\left[ c_x \cdot \frac{A_{x, z}}{\pi_\theta(z)} - d_z \cdot \frac{\exp l_x}{\sum_{x'} \exp l_{x'}} \right].
 \end{align*}
 Consider the quantity
 \begin{align*}
 V_x = \sum_z \pi_\theta(z)\left[ c_x \cdot \frac{A_{x, z}}{\pi_\theta(z)} - d_z \cdot \frac{\exp l_x}{\sum_{x'} \exp l_{x'}} \right].
 \end{align*}
 We see that
 \begin{align*}
 V_x &= \sum_z \pi_\theta(z) c_x  \frac{A_{x, z}}{\pi_\theta(z)} -  \sum_z \pi_\theta(z) \frac{\sum_{x'} A_{{x'}, z} c_{x'}}{\pi_\theta(z)} \frac{\exp l_x}{\sum_{x'} \exp l_{x'}}
 \\&= \sum_z c_x A_{x, z} -  \sum_z \left(\sum_{x'} A_{{x'}, z} c_{x'}\right) \frac{\exp l_x}{\sum_{x'} \exp l_{x'}}
 \\&= \sum_z \frac{\exp l_x}{\sum_z A_{x, z}} A_{x, z} -  \sum_z \left(\sum_{x'} A_{{x'}, z} \frac{\exp l_{x'}}{\sum_z A_{{x'}, z}}\right) \frac{\exp l_x}{\sum_{x'} \exp l_{x'}}
 \\&= \frac{\sum_z A_{x, z}}{\sum_z A_{x, z}}\exp l_x - \sum_{x'}   \frac{\sum_z A_{{x'}, z}}{\sum_z A_{{x'}, z}}\exp l_{x'}\frac{\exp l_x}{\sum_{x'} \exp l_{x'}}
 \\&= \exp l_x - \exp l_x\frac{\sum_{x'} \exp l_{x'}}{\sum_{x'} \exp l_{x'}}
 = 0.
 \end{align*}
 We conclude that 
 \begin{align*}
 \sum_z \pi_\theta(z) \pi_\theta(x|z, l) &= \frac{\exp l_x}{\sum_{x'} \exp l_{x'}} + \frac{1}{c_*}V_x = \frac{\exp l_x}{\sum_{x'} \exp l_{x'}},
 \end{align*}
 as desired.
 \end{proof}

 \section{Experiments details}\label{app:experiments-details}
 In all of our experiments we parameterize the two gadgets using multilayer perceptrons with two hidden layers of size 1024. Gadget 1 containes two sets of parameters, one for $\pi_{x,z}(p)$ and the other for $\pi_{y,z}(q)$. In gadget 2, for the last layer, we hold $\pi_\theta(Z)$ fixed as a uniform distribution over 20 latent causes. We used the Adam optimizer \citep{kingma2014adam} on the loss $g$. During training, we fix the softmax relaxation temperature to 1.  

The gadgets were implemented in Jax and PyTorch frameworks \citep{jax2018github, NEURIPS2019_9015} and were trained on Nvidia GeForce RTX 2080 GPU.

\begin{figure}[t]
\begin{tabular}{cccc}
\begin{minipage}{1.3in}{
    \includegraphics[width=1.3in]{figs/gadget2_loss_vs_noise_scale.png}
    }\end{minipage} &
    \begin{minipage}{1.0in}{
    \includegraphics[width=.9in]{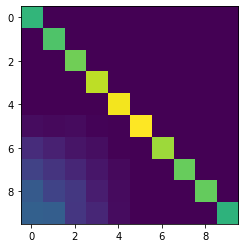}
    }\end{minipage} &
    \begin{minipage}{1.0in}{
    \includegraphics[width=.9in]{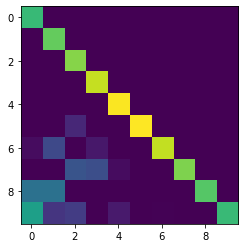}
    }\end{minipage} &
    \begin{minipage}{1.0in}{
    \includegraphics[width=.9in]{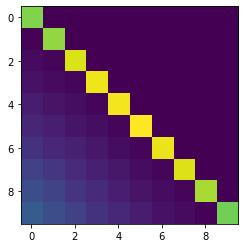}
    }\end{minipage} \\
    (a) Loss vs noise & (b) Optimal & (c) Ours & (d) Gumbel-max
\end{tabular}
\caption{Maximal coupling experiments with additional visualization. (a) When the training distribution is focused, we learn a near-maximal coupling. As the distribution becomes more diffuse, the learned coupling reverts to Gumbel-max. (b) Maximal coupling. (c) Our learned coupling in low-noise setting (Gadget 2). (d) Gumbel-max coupling.}
\label{fig:experiments1-appendix}
\end{figure}

 \paragraph{Minimizing variance over random logits}
  For the first part of the experiment, we started by tuning learning rate for each gadget between 1e-5 and 1e-2 in six steps, training for 5000 iterations using a single random seed. We then selected the best learning rate for each condition (``independent'' and ``mirrored''), and repeated training with this learning rate for five additional random seeds, training for 50,000 iterations. We computed $g_{l^{(1)}, l^{(2)}}(x,y) = (x-y)^2$ over random pairs $(p, q)$ in batches of 64 pairs and 16 drawn samples $(x, y)$ for each pair. We then evaluated on a set of 10,000 hold-out pairs $(p, q)$ for each random seed, and reported the standard deviation across the five seeds. (For the non-learned baselines, which are deterministic, this variance is instead over five independent evaluation sets.)
 
  For the second part of the experiment, we trained the gadgets for 48,000 steps.
  We then evaluated on 10 different reward realizations while we trained the gadgets from scratch in each trial, averaging variance across 2,000 samples per pair. 

 \paragraph{MDP counterfactual treatment effects}
 We learn the parameters of an MDP by interacting with the sepsis simulator for 10,000 times across all possible states-actions. Then, the behavior policy was trained over this MDP using policy iteration algorithm \citep{sutton2017reinforcement}, with full access to all MDP variables including a diabetes flag and glucose levels. The diabetes is present with a 20\% probability, increasing the likelihood of fluctuating glucose levels. The patient is considered dead if at least three of his vital signs are abnormal, and discharged if all of his vital signs are normal. We set the rewards for these absorbing states to be -4 and 4 respectively.

 Using the trained behavior policy, we draw 20,000 patient trajectories from the simulator with a maximum of 20 time steps, where the states are projected into a reduced state space of 6 discrete variables (without diabetes and glucose level). In total, there are 146 categories, including death and discharge states. Based on them, we learn the parameters of an additional MDP, and train the target policy over this MDP. In order to get $p(s' | s, a_{doctor})$, we looked at the projected MDP at the observed state-action pair. In order to get $q = p(s' | s, a_{intervention})$, we chose $a_{intervention}$ to be the $\argmax$ over the action space of the target policy.

\end{document}